\newtheorem{thm}{Proposition}
\newtheorem{lem}{Lemma}
\newtheorem{mthdef}{Definition}
\DeclareMathOperator*{\SE3}{\mathnormal{SE}(3)}
\DeclareMathOperator*{\Exp}{\mathrm{Exp}}
\DeclareMathOperator*{\tr}{^{\top}}
\DeclareMathOperator*{\ohat}{\theta^{\land}}
\begin{document}

\title[Eigen-Factors a Bilevel Optimization for Plane SLAM of 3D Point Clouds]{Eigen-Factors a Bilevel Optimization for Plane SLAM of 3D Point Clouds}


\author*[1]{\fnm{Gonzalo} \sur{Ferrer}}\email{g.ferrer@skoltech.ru}

\author[2]{\fnm{Dmitrii} \sur{Iarosh}}\email{dimajarosh@gmail.com}

\author[1]{\fnm{Anastasiia} \sur{Kornilova}}\email{anastasiia.kornilova@skoltech.ru}

\affil[1]{\orgdiv{Center for AI technology}, \orgname{Skolkovo Institute of Science and Technology
(Skoltech)}, \orgaddress{\street{Bolshoy Boulevard, 30, p.1}, \city{Moscow}, \postcode{121205}, \country{Russia}}}

\affil[2]{\orgdiv{Software Engineering Department}, \orgname{St Petersburg State University}, \orgaddress{\street{7-9 Universitetskaya Embankment}, \city{St Petersburg}, \postcode{199034},  \country{Russia}}}


\abstract{
Modern depth sensors can generate a huge number of 3D points in few seconds to be later processed by Localization and Mapping algorithms.
Ideally, these algorithms should handle efficiently large sizes of Point Clouds (PC) under the assumption that using more points implies more information available.
The Eigen Factors (EF) is a new algorithm that solves PC SLAM by using planes as the main geometric primitive. To do so, EF exhaustively calculates the error of all points at complexity {\em O(1)}, thanks to the {\em Summation matrix S} of homogeneous points.

The solution of EF is a bilevel optimization where the {\em lower-level} problem estimates  the plane variables in closed-form, and the  {\em upper-level} non-linear problem uses second order optimization to estimate sensor poses (trajectory).
We provide a direct analytical solution for the gradient and Hessian based on the homogeneous point-plane constraint. In addition, two variants of the EF are proposed: one pure analytical derivation and a second one approximating the problem to an alternating optimization showing better convergence properties.


We evaluate the optimization processes (back-end) of EF and other state-of-the-art plane SLAM algorithms in a synthetic environment,  and extended to ICL dataset (RGBD) and LiDAR KITTI datasets. EF demonstrates superior robustness and accuracy of the estimated trajectory and improved map metrics.
Code is publicly available at \texttt{https://github.com/prime-slam/EF-plane-SLAM} with python bindings and pip package.
}

\keywords{SLAM, Plane SLAM, Planar Bundle Adjustment, Differentiation in $\SE3$.}



\maketitle

\section{Introduction}\label{sec_intro}

The 
advent of modern sensing capabilities is offering the possibility to perceive 3D depth, i.e. Point Clouds (PC), at a high streaming rate from diverse sensors, such as 3D LiDARs, Time of Flight cameras, RGB-D, etc.
The more points are used, the more information is gathered from the environment.
However, is it possible to evaluate all observed points and improve accuracy without a considerable increase in complexity?
This is the main question this paper tries to address.

Aligning a sequence of observed PCs can be formulated as a Simultaneous Localization and Mapping (SLAM) \cite{dellaert2006} problem, where full trajectory and map geometry are purely estimated from the sensor's observed 3D points.
Similarly, this problem definition fits perfectly the Bundle Adjustment (BA) \cite{triggs1999bundle} problem. 
In this paper, we are interested in the design and study of a new observation function that allows to formulate the SLAM (or BA) problem by directly considering the sensor's observed points in an efficient manner.

An important hindrance is that the sensor's points from PCs are sampled from the scene geometry, consequently it is highly unlikely to observe the same exact point on later observations.
Therefore, one needs to define some representation of this inherent geometry that is capable of providing the error-point in an efficient manner.
We choose planar surfaces as landmarks due to their regularity and simple constraint equation. These geometrical surfaces are commonly present in indoors and urban areas.
Plane SLAM \cite{weingarten2006,trevor2012} approaches use planes as landmarks, but the observations are pre-processed into planar features, hence,  3D points are not directly used with their consequent performance deterioration. Common current challenges in  plane-SLAM methods include planar feature extraction or handling large-scale data. This paper is focused on the latter.


We propose a plane SLAM technique for back-end optimization, the Eigen-Factors (EF). While the front-end of SLAM deals with processing the raw data by sensors, the back-end focuses on the optimization processes, which is exactly the aim of this paper for plane SLAM. EF is tailored for 3D Point Clouds (PC) from RGBD and LiDAR, while being capable of processing a large amount of points with constant complexity.
This paper is an improved version of the former EF \cite{ferrer2019}, overcoming its main limitation (first order optimization).
The EF uses planar geometric constraints for the alignment of a trajectory described by a moving sensor. To do so, EF aggregates all observed points into the {\em Summation matrix} $S$, a summation of outer product of homogeneous points and directly calculates the exact point error at $O(1)$ complexity.

In addition, EF optimizes the point to plane error w.r.t. the trajectory. The plane variables are calculated in closed form in a {\em lower-level} problem and the trajectory estimation corresponds to the {\em upper-level} problem in a bilevel optimization program \cite{colson2007overview}.

The gradient and the Hessian are directly calculated by reformulating the problem in terms of manifold optimization on $\SE3$ and retractions. Unfortunately, EF does not allow for the Gauss-Newton optimization approximation, since there are not residuals but eigenvalues, so instead we use second order optimization.
In order to differentiate over $\SE3$, we have proposed a methodology which greatly simplifies the calculation of derivatives of any order and it could be used in other problems where an analytical expression is complex to derive, as in our case through eigenvalues or matrix elements.

We provide two solutions for the EF trajectory optimization: One is an analytical and exhaustive derivation of the Hessian, which is a {\em dense} matrix and the second is an approximation by considering the problem to be {\em alternating}, i.e. the variable set on each of the loops are independently estimated.
The ideal application of EF is map refinement of the 3D PC, therefore it requires a relatively good initial estimate of the trajectory.
Both EF variants provide accurate results, and it will be discussed later the advantages and disadvantages of each method.

The contributions of the present paper are the following:
\begin{itemize}
    \item Matrix summation $S$ to calculate all point to plane errors at $O(1)$ complexity;
    \item Bilevel optimization where first the plane parameters are calculated in closed form and then {\em upper-level} variables  (only trajectory) are estimated with non-linear second order optimization;
    \item Analytical calculation of the gradient and the Hessian for the EF (Dense) and an alternating optimization approximation;
    \item Methodology to calculate derivatives of any function involving the manifold of rigid body transformations $\SE3$;
\end{itemize}

The evaluation compares EF with other state-of-the-art plane SLAM back-end algorithms in synthetic, RGBD and LiDAR environments, since we believe these are the most useful settings for plane SLAM.
In order to clearly understand all the details of each method, we have decided to compare only the back-end and provide to all methods the same exact input. The front-end has an impact to the final performance of the methods so we have decided to isolate this effect.


\section{Related Work}\label{sec_rw}

The initial motivation for EF is to align PC minimizing the point error.
The Iterative Closest Point (ICP) algorithm \cite{chen1991,besl1992, zhang1994,rusinkiewicz2001} fits a pair of PCs by iteratively finding correspondences and solving the optimization problem \cite{horn1987,arun1987,umeyama1991}.
Existing solutions include matching point-to-point, point-to-line \cite{censi2008}, point-to-plane, and other variants emulating planes \cite{segal2009,serafin2015} or assuming a general geometry. These techniques require to process all points every iteration and can achieve high accurate results.

One can bring this idea to a sequence of PC or Multi-view registration. An early work \cite{bergevin1996towards} proposed to match all points in every view, reducing the overall point error, but requiring large  memory resources and intense computations.
Improvements on Multi-view registration targeted efficiency, for instance considering pairs of PC as constraints \cite{pulli1999multiview}.

Although computing the overall point error is a natural choice for mapping and SLAM, researchers have been trying to reduce the {\em a priory} high computational requirements of these techniques.
On this regard, processing plane landmarks for each PC observation could be seen as a proxy for computing the point error but at a reduced computational cost.
Therefore, plane Landmark-based SLAM optimizes trajectory and a  map of plane-landmarks \cite{weingarten2006,trevor2012}.
There have been proposed multitude of variants, some of them focusing on the plane representation \cite{kaess2015,geneva2018}, 
adding planar constraints \cite{zhang2019}  
or a mixture of points and planes \cite{taguchi2013,zhou2021lidar}.
The key problem of plane landmarks is that after extracting planar features from noisy points, there is an inevitable loss of information compared to considering jointly all points. EF on the other hand, keeps all points' information.

Modern mapping approaches, in order to ensure real-time (or close) conditions, aggregate observations to the map after an accurate scan-to-map, effectively marginalizing poses.
Examples of these approaches include  LOAM \cite{zhang2014,zhang2017} which combines PC alignment with mapping or LIO-mapping \cite{ye2019}.
Surfel-based approaches \cite{whelan2016,behley2018} follow the same principle dividing the scene in planar disks, obtaining impressive results.
When considering loop closure, pure mapping approaches present limitations and require an extra effort to update their mapping solutions properly.
SLAM approaches do not suffer this limitation, since in general both the trajectory and the map are estimated simultaneously, so they handle more naturally loop closure situations at the cost of higher computational resources.

When considering overall point error, efficiency is of the essence.
Our previous paper on EF \cite{ferrer2019} introduces for the first time the $S$ matrix as a summation of outer products the homogeneous points from each sensor pose.
This allows to optimize the point to plane error with complexity $O(1)$, which is the key requirement for our approach to work efficiently.
Later, $\pi$-LSAM \cite{zhou2021pi} makes use of the same matrix $S$ to calculate the overall point to plane error to solve the Planar Bundle adjustment estimating jointly poses and plane-landmarks.
This approach is shown to perform worse  (Sec.~\ref{sec_rbgd}) under the presence of noise than point-error methods like EF.
In a following work, the authors refer to this homogeneous summation matrix $S$ as the {\em Integrated Cost Matrix} (ICM) \cite{zhou2021lidar}.

The BAREG \cite{huang2021} algorithm uses the functional proposed in \cite{ferrer2019} and updates incrementally a covariance matrix $3\times 3$, without explicit re-evaluation of points, maintaining the same advantages as the $S$ matrix.
BALM2 \cite{liu2022efficient} uses the same concept of the $S$ matrix, which they refer as {\em Cluster Points} and formalizes some properties of the $S$ matrix such as superposition and transformation.

The idea of using the eigendecomposition from EF as an objective function is adopted by
BALM \cite{liu2021balm}, whose derivation is from a $3\times 3$ covariance instead of the $4\times 4$ homogeneous $S$ matrix as in EF. The authors improve the efficiency \cite{liu2022efficient} by calculating analytical derivatives of the gradient and Hessian.
We empirically show how a tighter estimation of the Hessian, as proposed by BALM, does not necessarily yield better results especially when evaluating under perturbations, whereas the EF analytical approximation to the Hessian performed more accurately.

In BAREG\cite{huang2021}, the authors use the eigendecomposition to demonstrate the equivalence of the point-to-plane error to the plane-to-plane error, with proper weights. This equivalence assumes that the plane parameters are optimal which might be a too restrictive condition in practice. However, this reduction greatly reduces the complexity and efficiency of the optimization step. It also requires the plane estimation at each observation, which has consequences when evaluating under strong point noise conditions.


In this work, we are interested also in assessing the quality of the map by using a mapping metric. 
The ground truth on some datasets will not be accurate enough for the mapping purposes, so these assessment tools are necessary.
Whereas there are many solutions for precise trajectory estimation from sequences of point clouds, the topic of measuring map quality has not received the same attention. 
The 3D graphics community proposes a set of full reference metrics for solving tasks of point cloud compression and denoising, including p2point metric, p2plane metric~\cite{tian2017}, angular similarity~\cite{alexiou2018}, projection based methods~\cite{torlig2018} or SSIM \cite{meynet2019,alexiou2020}.
The majority of metrics based on point cloud requires reference to ground truth, whereas in the evaluation scenario we propose (KITTI), 3D reconstruction ground truth will not be available. Therefore no-reference evaluation is used in this work. In particular,  Mean Map Entropy (MME) \cite{droeschel2014local} and Mean Plane Variance (MPV) \cite{razlaw2015evaluation} evaluate noise in the aggregated point cloud and Mutually Orthogonal Metric (MOM) \cite{kornilova2021} correlates reconstruction noise with perturbation in the 3D poses.


\section{Eigen-Factors: Point to Plane Error}\label{sec_bg_plane}

A plane $\pi$ in the 3D space is defined by a  normal unit vector $\eta \in \mathbb{S}^2$ and the plane distance to the origin $d \in \mathbb{R}$:

\begin{equation}
\pi = \begin{bmatrix}\eta\\ d\end{bmatrix} \in \mathbb{P}^3, \qquad \text{where   } ||\eta|| =1.
\label{eq_plane_def}
\end{equation}

A point  $p = [x,y,z]^{\top}$ lies on the  plane $\pi$ if an only if:
\begin{equation}
\pi^{\top} \begin{bmatrix}p\\ 1\end{bmatrix} = \pi^{\top} \tilde{p} = 0.
\label{eq_plane_constraint}
\end{equation}

With this constraint and given a set of noisy observed points from the same planar surface, one can solve the plane estimation  mainly by two approaches.
The {\bf centered method} minimizes the overall point to plane error
\begin{equation}
\min_{\pi} \sum_{n=1}^N|| \eta^{\top} p_n + d ||^2.
\end{equation}

The solution comes in two steps:
\begin{gather}
\min_{\eta} \left\{ \sum_{n=1}^N \eta^{\top}  \underbrace{(p_n - E\{p\}) (p_n - E\{p\})}_{\Sigma_p} \eta  \right\} \\
d = - E\{\eta^{\top} p\},
\label{eq_center_estimate}
\end{gather}
first the eigendecomposition of the $3\times 3$ matrix covariance $\Sigma_p$ solves the value of $\eta$ and then $d$ is obtained.

The {\bf homogeneous method} calculates the point to plane error of
a plane $\pi$ without centering the data points:
\begin{equation}
\min_{\pi} \sum_{n=1}^N|| \pi^{\top} \tilde{p}_n||^2 = \pi^\top \tilde{P} \tilde{P}^\top \pi
\label{eq_homogeneous_estimate}
\end{equation}
where the matrix $\tilde{P}$ is the stacked vector of $N$ homogeneous points. This arrangement of terms in the homogeneous plane estimation method allows us to define the {\em Summation} matrix:
\begin{equation}
S := \tilde{P} \tilde{P}^\top = \sum_{n=1}^N \tilde{p}_n \tilde{p}_n^{\top},
\label{eq_s_matrix}
\end{equation}
which is the sum of the outer product of all points in the plane.
The solution of (\ref{eq_homogeneous_estimate}) is calculated using the eigendecomposition of the $4\times 4$ matrix $S$.
In \cite{klasing2009} there is a comparison of the two methods yielding similar results. However, we found that this statement hold for some conditions, namely when near the origin, and when calculating the derivatives, the centered approach is numerically more stable (see Sec.~\ref{sec_exp_numerical}).

Our ultimate objective is the optimization of the trajectory and the geometrical features in the map. The disadvantage of the {\em centered method} is that every time a new sample is added or modified, all calculations should be carried out again.
It is more natural to develop the {\em homogeneous method} to account for poses in a highly efficient solution. We will see how it is possible to derive an homogenous-based approach while maintaining the numerical advantages of the centered method.

Therefore, the Eigen-Factors is a generalization of the homogeneous plane estimation observed over a sequence of reference frames or poses  $T_t \in \SE3$ at different instants of time $t \in \{ 1,\ldots,H\}$.
It can also be understood as a trajectory $\bm{T} = \{T_1,\ldots ,T_H\}$.
Each of these poses transforms points to a global reference frame $g$, such that ${}^gT_t$, where we will omit the global frame reference for simplicity in the following sections.
The underlying idea is that the same plane $\pi$ is observed from different poses $T_t$ but the planar constraint must hold for all views.
Accordingly, we can reformulate the plane estimation problem for the plane $\pi$ as
\begin{equation}
\min_{\pi} \sum_{t=1}^H \sum_{n=1}^{N_t} || \pi^{\top} T_t \tilde{p}_{t,n}||^2 
 = \min_{\pi} \pi^{\top} \big(\sum_{t=1}^H T_t S_t T_t^{\top}\big) \pi,
\label{eq_ef_plane_estimation}
\end{equation}
where each of the points $\tilde{p}_{t,n}$ denotes the reference frame $t$ it was observed and the matrix $S_t$ includes all these $N_t$ points as in (\ref{eq_s_matrix}).
By re-arranging terms one can see as the plane $\pi$ is out of the summation, so the plane solution boils down to the eigendecomposition of this new matrix.
Therefore, we define the matrix $Q$ as the summation
\begin{equation}
Q(\bm{T}) :=  \sum_{t=1}^H T_t S_t T_t^{\top} = \sum_{t=1}^H Q_t(T_t),
\label{eq_q}
\end{equation}
where each component $Q_t$ depends on the matrix $S_t$, {\bf calculated  only once}, and the current 3D pose estimation $T_t$.
The matrix $S_t$ expresses points in the local coordinate system at time $t$ and is constant, while the matrix $Q$ is expressed in the global reference frame and is recalculated for  each new updated trajectory.

If one inspects the elements of the matrix $Q$, the following terms appear:
\begin{equation}
Q = \begin{bmatrix} \sum_i p_i\cdot p^{\top}_i & \sum_i p_i \\ \sum_i p^{\top}_i & N \end{bmatrix}
 = \begin{bmatrix} Q_{p} & q \\ q^{\top} & N \end{bmatrix},
 \label{eq_q_decomposed}
\end{equation}
where $Q_p$ is the sum of outer products of 3D points and $q$ is the sum of all points, proportional to the mean. We consider the global reference frame of points for simplification, but each of the local coordinates at $t$ has been already transformed by $T_t$.

Then, the solution of the minimization problem is obtained directly by the eigendecomposition:
\begin{gather}
\pi^* = k \cdot u_{min}(Q(\bm{T})) \label{eq_eigen} \\
\text{s.t. }  ||\eta^*|| = 1. \nonumber
\end{gather}

The minimum eigenvector $u_{min}$ is proportional to the plane solution. The exact solution, in order to have geometrical meaning, must fulfill the definition of a plane (\ref{eq_plane_def}) with unit normal $\eta$. Therefore, it is multiplied by the scalar value $k$.
Without this correction, each of the errors would be scaled differently and the solution would be biased.
The summation of the point to plane squared error (for plane $\pi$) is equal to
\begin{equation}
    \lambda_{\pi}(\bm{T}) := \min_{\pi}  \pi^{\top} Q(\bm{T}) \pi = k^2 \lambda_{min}(Q(\bm{T})),
    \label{eq_ef_cost}
\end{equation}
where $\lambda_{min}$ is the minimum eigenvalue, which equals the minimum cost or minimum error after proper scaling by $k^2$, easily derived from (\ref{eq_eigen}).
Note that the minimization of the plane is implicit after solving the algebraic equation by the eigendecomposition, and as a consequence 
the only remaining state variables to be estimated are the trajectory ones, {\bf reducing the dimensionality} of the problem  considerably.
If considering the more general form of a bilevel problem:
\begin{gather*}
\min_{x,y} F(x,y) \\
\text{s.t.} \, \min_{y} f(x,y)
\end{gather*}
which can be viewed as the {\em lower-level} program $f(x,y)$ solving for plane parameters, while the {\em upper-level} program solves the trajectory \cite{colson2007overview}.

The function $\lambda_\pi(\cdot)$ obtained in (\ref{eq_ef_cost}) for the plane $\pi$ shows some interesting properties. In the ideal case of no-noise, this function should be zero and its values are always positive by construction $\lambda_\pi(\cdot)\geq 0$. We could reformulate this as an {\em implicit observation function} of the form $h(x,z)=0$, where in our particular case, the state $x$ is the trajectory $\bm{T}$ and the observation $z$ are all points. In contrast, an explicit observation is of the form $h(x)=z$, which relates the state with the observations.

\begin{figure*}[t!]
    \centering
    \includegraphics[width=.97\textwidth]{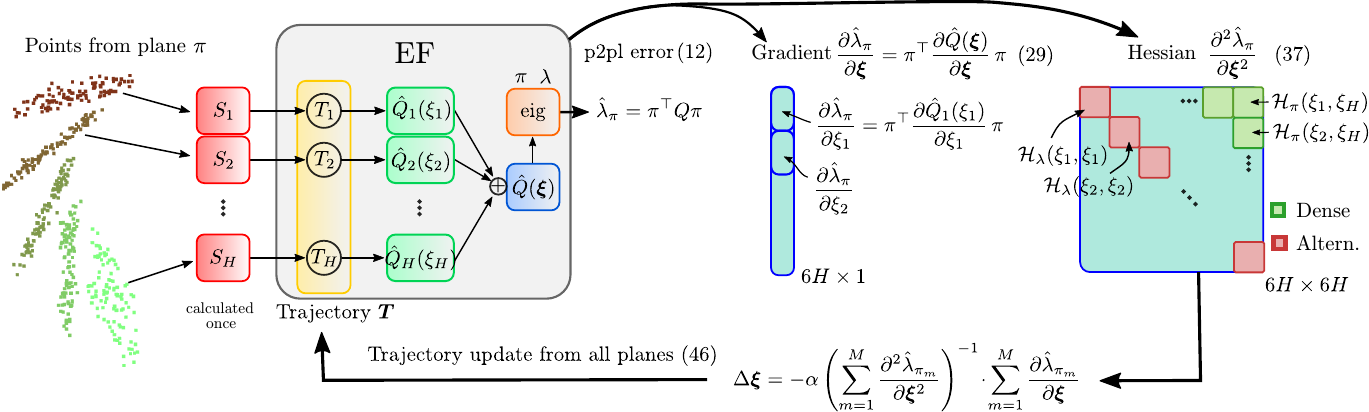}
    \caption{Diagram of the EF. From the left, the points from a single plane are depicted observed from different poses. The algorithm then estimates the $S_t$ matrices (only once) and process the data to estimate the plane $\pi$ and its eigenvalue $\lambda$. Following we can calculate the point error, the gradient (Sec.~\ref{sec_ef_diff}) and the Hessian (Sec.~\ref{sec_hessian}) in two forms, in red the {\em alternating} (block diagonal) and in green the {\em dense} method. This scheme corresponds to an iterative optimization process, so after updating the poses, the processes is repeated.}
    \label{fig_ef_flow}
\end{figure*}

Unfortunately, this implicit observation function is not suitable for general nonlinear least squares (NLLS) nor even simple Gauss-Newton.
The state estimation problem from $\lambda_\pi(\cdot)$, where the total cost to minimize is defined as

\begin{equation}
    C(\bm{T}) := \sum_{m=1}^M \lambda_{\pi_m}(\bm{T}),
    \label{eq_total_cost}
\end{equation}
given that $M$ is the total number of observed planes. Some important remarks:
\begin{itemize}
    \item The cost $C(\bm{T})$ is equal to the total point-to-plane squared error evaluated for each  of the poses $T_t$.
    \item The complexity is independent of the number of points, since the matrix $S_t$ requires them to be summed only once and used thereafter without modifications.
    \item The plane parameters are estimated in closed-form and the state variables are the sequence of poses $\bm{T}$ only.
\end{itemize}

In essence, the EF is a program that seeks to minimize the point to plane error w.r.t. the trajectory.
A diagram with the main parts can be seen in Fig.~\ref{fig_ef_flow}.
We will present as follows the specific functions and differentiation procedure to achieve this.

\section{Retraction and Optimization on the Manifold}\label{sec_manifold}

This section introduces some tools from optimization on matrix manifolds \cite{absil2009} required for the correct development of a second-order optimization method for the Eigen Factors. 
The problem of interest is to further develop the map $Q(T):\SE3 \to \mathbb{R}^{4\times 4}$, so we propose  systematic rules to differentiate any function involving $\SE3$ in a simple way.

A Rigid Body Transformation (RBT), denoted as  ${}^{f}T_{o} \in \SE3$, 
allows to transform points from an origin reference system $o$  to a final reference system $f$. It also transforms reference frames or simply expresses a 3D pose uniquely.
More formally, the Special Euclidean group is
\begin{equation}
SE(3) = \left\{ T =  \begin{bmatrix} \mathbf{R} & \bm{t} \\ 0  & 1 \end{bmatrix} \, | \, \mathbf{R} \in SO(3) \, \land \, \bm{t} \in \mathbb{R}^3 \right\}.
\end{equation}

In addition, RBT is a matrix group $\SE3 \subset \mathbb{R}^{4\times 4}$ and due to its inner structure and properties is a Riemannian manifold.
As such, there is a smooth mapping (chart) from a subgroup of $\SE3$ elements to the vector space $\mathbb{R}^6$.
It is well known that the dimensionality of a pose in 3D is six, however, there exist several possible mappings.

Our interest is on the optimization of a cost function whose input is an element of $\SE3$ or more generally, an element of a manifold $\mathcal{M}$.
Let the function $f(x):\mathcal{M} \to \mathbb{R}$ be a mapping from a manifold to a real value. Then, if one applies directly the definition of directional derivative
\begin{equation}
\nabla f(x) = \lim_{h \to 0} \frac{f(x+h\eta) - f(x)}{h},
\label{eq-directional-der}
\end{equation}
finds this operation not suitable for numerical calculations involving manifolds.

One can solve this problem by using a {\em retraction} mapping.
\begin{mthdef}
A retraction $R_T(\xi)$ is a smooth mapping from the tangent space around the $T$ element to the manifold: 
\begin{equation}
R_T(\xi):\mathbb{R}^6 \to \SE3.
\end{equation}
Two conditions must be satisfied: i) $R_T(0_T) = T$  and ii) local rigidity (see Ch.4 \cite{absil2009}).
\end{mthdef}

Accordingly, we can compose a new function when using a retraction:
\begin{equation}
\hat{f}_T = f \circ R_T,
\label{eq_func_retraction}
\end{equation}
from the tangent space $\mathbb{R}^6$ to a real value.
This new function $\hat{f}_T$ admits a directional derivative (\ref{eq-directional-der}). As a convention in this paper, all retractions are denoted with the superscript $\hat{\cdot}$ ({\em hat}).

As a result, an optimization problem $x' = x + \Delta x$, which is not well defined for elements of a manifold $x \in \mathcal{M}$,
becomes well defined when using a retraction
\begin{equation}
T' = R_T(\Delta \xi).
\end{equation}

In the case of first order optimization, the update is proportional to the gradient direction $\Delta \xi = -\alpha \nabla_{\xi}\hat{f}_T$. Gradient decent is one of the simplest methods for optimization: it requires a direction and a magnitude.
Second-order optimization methods, provide the direction of descent weighted by the curvature of the cost surface $\Delta \xi = -\alpha (\nabla_{\xi}^2\hat{f}_T)^{-1}\nabla_{\xi}\hat{f}_T$,
resulting in improved convergence rates.

%
In the particular case of $\SE3$, its tangent space has a structure of Euclidean space. Only under these conditions, the gradient and the Hessian are well defined after using a retraction:
\begin{gather*}
\nabla_T f(T) = \nabla_{\xi}\hat{f}_T(0_T) \\
\nabla_T^2 f(T) = \nabla_{\xi}^2\hat{f}_T(0_T).
\end{gather*}

These results are significant since they will allow to use the concept of directional derivatives (\ref{eq-directional-der}) by simply applying real analysis into the calculation of the gradient and Hessian in Sec.\ref{sec_ef_diff}.
By doing that, we will implement a second-order optimization method that have almost no need for hyper-parameters and quadratic convergence rate \cite{nocedal2006}.

\subsection{Differentiating over the Exponential Map in $\SE3$}

The exponential map is the most natural retraction choice: its definition is highly related with the concept of perturbations over $\SE3$ which will facilitate the calculation of derivatives in the following sections.
There are other options for choosing a retraction mapping, as long as they satisfy the definition. 

The exponential map defines the retraction $R_T$ in the following way:
\begin{equation}
R_T(\xi) = \Exp(\xi)T.
\label{eq_retraction}
\end{equation}

We follow a left-hand-side convention, but the right-hand-side is a valid retraction and it would require a similar derivation of the terms.
The retraction around the identity can be expanded by using the matrix exponential definition as a Taylor series
\begin{equation}
R_I(\xi) = \Exp(\xi)I = I + \xi^{\land} + \frac{1}{2}(\xi^\land)^2 + o(|\xi|^3),
\label{eq_exp_identity}
\end{equation}
where the vector $\xi = [ \theta\tr ,\rho\tr]\tr = [\xi_1,\ldots ,\xi_6]\tr \in \mathbb{R}^6$ and the matrix
\begin{eqnarray}
\xi^{\land} =  \begin{bmatrix} \ohat & \rho \\ 0 & 0 \end{bmatrix} = 
           \begin{bmatrix} 0 & -\theta_3 & \theta_2 & \rho_1\\ 
                              \theta_3 & 0 & -\theta_1 & \rho_2\\
                              -\theta_2 & \theta_1 & 0 & \rho_3 \\
                              0 & 0 & 0 & 0
\end{bmatrix}	
\label{eq_lie_generators}
\end{eqnarray}
is the Lie Algebra matrix group $\mathfrak{se}(3)$ that represents the group of infinitesimal RBT around  the identity. See \cite{lynch2017,barfoot2017,sola2018micro} for a comprehensive explanation on the topic of RBT and its Lie algebra.
This matrix can be rearranged such that it has a vector space structure
\begin{equation}
\xi^{\land} = G_1 \xi_1 + G_2 \xi_2 + \ldots + G_6 \xi_6,
\end{equation}
where $G_i$ is a $4\times 4$ matrix of the i-$th$ generator
and the basis $\bm{G} = \{G_i\}, \forall i = 1,\ldots , 6$.


After defining the retraction map (\ref{eq_retraction}), the next step is to obtain the derivative of the retraction, i.e., the exponential map evaluated at the zero element.
\begin{lem}
The derivative of the exponential map with respect to one of the Lie coordinates $\xi_i$ is its corresponding matrix generator
\begin{equation}
\frac{\partial \Exp(\xi)}{\partial \xi_i} = G_i.
\end{equation}
\end{lem}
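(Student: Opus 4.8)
The plan is to prove the identity at the zero element $\xi=0$ (which is how it is used in Sec.~\ref{sec_manifold}, namely as the retraction derivative $\nabla_\xi\hat{f}_T(0_T)$), working directly from the matrix-exponential series rather than from any closed form of $\Exp$. First I would recall from (\ref{eq_exp_identity}) the expansion $\Exp(\xi) = I + \xi^{\land} + \frac{1}{2}(\xi^{\land})^2 + o(|\xi|^3)$ and substitute the linear decomposition $\xi^{\land} = \sum_{j=1}^{6} G_j\,\xi_j$, which is legitimate because the hat map in (\ref{eq_lie_generators}) is linear in the six coordinates $\xi_1,\dots,\xi_6$ and the generators $G_j$ are constant $4\times 4$ matrices.

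Then the computation is immediate term by term: the constant $I$ differentiates to zero; the linear term gives $\partial_{\xi_i}\big(\sum_j G_j\xi_j\big) = G_i$; and every term of order $\ge 2$, such as $\frac{1}{2}(\xi^{\land})^2 = \frac{1}{2}\sum_{j,k} G_jG_k\,\xi_j\xi_k$, has a partial derivative that is still at least linear in $\xi$ and therefore vanishes at $\xi=0$. Equivalently — and perhaps cleaner to present — I would use the directional-derivative characterization $\partial_{\xi_i}\Exp(\xi)\big|_{0} = \lim_{h\to 0} h^{-1}\big(\Exp(h\,e_i) - I\big)$, where $e_i$ is the $i$-th canonical basis vector of $\mathbb{R}^6$; since $\Exp(h\,e_i) = I + h\,e_i^{\land} + O(h^2) = I + h\,G_i + O(h^2)$ by (\ref{eq_exp_identity}), the limit equals $G_i$.

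The only step that requires a word of care — and it is a minor one — is the termwise differentiation of the series and the control of the remainder: this is justified because the matrix exponential is entire, so every entry of $\Exp(\xi)$ is an everywhere-convergent power series in $\xi_1,\dots,\xi_6$ that may be differentiated term by term, and the differentiated tail is still $o(1)$ as $\xi\to 0$. No separate argument is needed for a general base point: the retraction (\ref{eq_retraction}) factors as $\Exp(\xi)T$, so only the derivative of $\Exp(\xi)$ at the identity enters here, and the constant right factor $T$ (were one to state the analogous result for $R_T$) would simply turn $G_i$ into $G_i T$.
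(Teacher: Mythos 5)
Your proof is correct and follows essentially the same route as the paper: expand the matrix-exponential Taylor series from (\ref{eq_exp_identity}), differentiate term by term using the linearity of the hat map $\xi^{\land}=\sum_j G_j\xi_j$, and observe that all higher-order terms vanish at the zero element, leaving $G_i$. Your version is in fact slightly more careful than the paper's (which writes the evaluation as $\xi_i=0$ where the vanishing of the cross terms really requires the full $\xi=0$, and which does not remark on the legitimacy of termwise differentiation), but the underlying argument is identical.
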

\begin{proof}
One can expand analytically the definition of the matrix exponent in (\ref{eq_exp_identity})
\begin{align}
\frac{\partial \Exp(\xi)}{\partial \xi_i} & = \frac{\partial}{\partial \xi_i}\Big(I + \xi^{\land} + \frac{1}{2}(\xi^\land)^2 + o(|\xi|^3)\Big)\bigg|_{\xi_i = 0} \nonumber \\
 & = G_i + \frac{1}{2}G_i\xi^{\land} + \frac{1}{2}\xi^\land G_i + \ldots \bigg|_{\xi_i = 0} = G_i
\end{align}
where after differentiating each of the terms, the higher order terms vanish when evaluated at $\xi_i=0$.
\end{proof}

\begin{lem}
The second derivative of the exponential with respect to the Lie coordinates $\xi_i$ and $\xi_j$ is
\begin{equation}
    \mathcal{H}_{ij} := \frac{\partial^2 \Exp(\xi)}{\partial \xi_j \partial \xi_i} = \frac{1}{2}G_iG_j +  \frac{1}{2}G_jG_i.
\end{equation}
\end{lem}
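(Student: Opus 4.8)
The plan is to imitate the proof of Lemma 2 (the first-derivative case) by working directly with the Taylor expansion of the matrix exponential and differentiating term by term, then evaluating at $\xi = 0$. Recall from (\ref{eq_exp_identity}) that
\begin{equation}
\Exp(\xi) = I + \xi^{\land} + \tfrac{1}{2}(\xi^{\land})^2 + o(|\xi|^3),
\label{eq_exp_plan}
\end{equation}
and that $\xi \mapsto \xi^{\land}$ is \emph{linear}, with $\partial \xi^{\land}/\partial \xi_i = G_i$ for every $i$. The key observation is a bookkeeping one: after two differentiations and evaluation at the origin, the constant term $I$ drops out, the linear term $\xi^{\land}$ has a vanishing second derivative, every term of order $o(|\xi|^3)$ still carries at least one factor of $\xi$ and hence vanishes at $\xi = 0$, so the \emph{only} surviving contribution comes from the quadratic term $\tfrac{1}{2}(\xi^{\land})^2$.

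First I would differentiate (\ref{eq_exp_plan}) with respect to $\xi_i$ using the product rule on $(\xi^{\land})^2 = \xi^{\land}\xi^{\land}$, obtaining
\begin{equation}
\frac{\partial \Exp(\xi)}{\partial \xi_i} = G_i + \tfrac{1}{2}\big(G_i \xi^{\land} + \xi^{\land} G_i\big) + o(|\xi|^2).
\end{equation}
Then I would differentiate this expression with respect to $\xi_j$. The term $G_i$ is constant, so it contributes nothing; the bracketed term, being linear in $\xi$, contributes $\tfrac{1}{2}(G_i G_j + G_j G_i)$; and the remaining $o(|\xi|^2)$ terms still vanish when evaluated at $\xi = 0$. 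Hence
\begin{equation}
\frac{\partial^2 \Exp(\xi)}{\partial \xi_j \partial \xi_i}\bigg|_{\xi = 0} = \tfrac{1}{2}G_i G_j + \tfrac{1}{2}G_j G_i,
\end{equation}
which is the claim. Equivalently, one may expand $(\xi^{\land})^2 = \sum_{k,l} G_k G_l\, \xi_k \xi_l$ at the outset and read off the coefficient of $\xi_i \xi_j$ directly, picking up both the $(k,l)=(i,j)$ and $(k,l)=(j,i)$ contributions; this makes the symmetry in $i \leftrightarrow j$ (equality of mixed partials) manifest.

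There is essentially no hard step here: the result is a routine consequence of term-by-term differentiation of a convergent matrix power series, exactly as in Lemma 2. The only point requiring minor care is the non-commutativity of the generators, which is why the answer is the symmetric product $\tfrac{1}{2}(G_iG_j + G_jG_i)$ rather than a single product $G_iG_j$; keeping both orderings from the product rule on $(\xi^{\land})^2$ is the one place a careless computation could go wrong. A secondary remark worth including is that the legitimacy of differentiating the series term by term, and of discarding the $o(|\xi|^3)$ remainder after two derivatives, follows from the local uniform convergence of the exponential series (or simply from the fact that $\Exp$ is real-analytic), so no subtlety about interchanging limits and derivatives arises.
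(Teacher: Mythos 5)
Your proof is correct and follows essentially the same route as the paper: differentiate the Taylor expansion of $\Exp(\xi)$ term by term, apply the product rule to $\tfrac{1}{2}(\xi^{\land})^2$ to get both orderings $G_i\xi^{\land}$ and $\xi^{\land}G_i$, and evaluate at $\xi=0$ so that all higher-order terms vanish. The extra remarks on reading off the coefficient of $\xi_i\xi_j$ and on term-by-term differentiability are sound but not needed beyond what the paper already does.
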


\begin{proof}
one can repeat the same procedure to expand analytically the second-order derivative:
\begin{align}
\frac{\partial^2 \Exp(\xi)}{\partial \xi_j \partial \xi_i} & = 
\frac{\partial^2}{\partial \xi_j \partial \xi_i}\Big(I + \xi^{\land} + \frac{1}{2}(\xi^\land)^2 + o(|\xi|^3)\Big)\bigg|_{\xi = 0} \nonumber \\
& = \frac{\partial}{\partial \xi_j}\Big(G_i + \frac{1}{2}G_i\xi^{\land} + \frac{1}{2}\xi^\land G_i + o(|\xi|^3) \Big)\bigg|_{\xi = 0} \nonumber \\
& = \frac{1}{2}G_iG_j +  \frac{1}{2}G_jG_i
\end{align}
\end{proof}

This compact result is a $4 \times 4$ matrix, one for each of the $(i,j)$ coordinates of the Hessian.
The overall tensor $\frac{\partial^2 \Exp(\xi)}{\partial \xi^2}$ 
has dimensions $4\times 4 \times 6 \times 6$ when considering each of its coordinates $(i,j)$. Note the symmetry in $\mathcal{H}_{ij} = \mathcal{H}_{ji}$.

The idea of retraction functions plus the first and second-order derivatives of the exponential map will be enormously useful when calculating derivatives of more complex functions, such as the matrices in the EF, as we show in the next section.

\section{Eigen-Factors Differentiation}\label{sec_ef_diff}


Let's start considering only the transformation $T_t$ at time $t$, then $Q=Q_t$ 
can be rewritten as the retraction function (\ref{eq_func_retraction})  of the form
$\hat{Q}_{t} = Q_t \circ R_{T_t}:\mathbb{R}^6 \to \mathbb{R}^{4\times 4}$, such that
\begin{equation}
\hat{Q}_{t}(\xi_t) = \Exp(\xi_t)\cdot Q_t \cdot \Exp(\xi_t)^{\top}.
\label{eq_q_t}
\end{equation}

This form generalizes to the full trajectory, since the $Q$ matrix in (\ref{eq_q}) is defined as a sum of $Q_t$ and it is a function of the trajectory $\bm{T}$. Accordingly, it is also a function of the retraction coordinates $\bm{\xi} = \{\xi_{1,1}, \xi_{1,2} \ldots, \xi_{t,i},\ldots \xi_{H,6}\}$, which are the variables that support the optimization. We are using Lie coordinates $i$, and trajectory elements $t$ so in total the dimension is $6\times H$.

The point to plane error, defined as the $\lambda_\pi(\bm{T})$ in the EF derivation in (\ref{eq_ef_cost}), can be as well redefined as the retraction function 
\begin{equation}    
\hat{\lambda}_\pi(\xi_t) = \lambda_\pi \circ \hat{Q}_{t}: \mathbb{R}^6 \to \mathbb{R}
\end{equation}
which is now a well defined function to calculate directional derivatives (\ref{eq-directional-der}). One can evaluate the retraction around any other pose in a more general form $\hat{\lambda}_\pi(\bm{\xi})$.

Now, it is necessary to propagate the gradient through the eigendecomposition.

\begin{thm}
The derivative of the eigenvalue $\lambda_\pi$ with respect to the pose $T_t$ equals
\begin{equation}
\nabla_{\xi_t}\hat{\lambda}_\pi = \frac{\partial \hat{\lambda}_\pi}{\partial \xi_t} = 
\pi^{\top} \frac{\partial \hat{Q}_t}{\partial \xi_t} \, \pi.
\label{eq_eig_gradient}
\end{equation}
\end{thm}
\begin{proof}
By definition, the eigendecomposition is expressed as $\hat{Q} u = \hat{\lambda} u$.
The vector of parameters $u$ is a unit vector s.t. $||u||^2=u^{\top}u=1$.
One can exchange the unit vector $u$ for any other vector $\pi \in \mathbb{P}^3$, since $\pi = k u$ and $||\pi|| = k^2$,
from definition $\hat{\lambda}_\pi$ in (\ref{eq_ef_cost}).

$\hat{Q} \pi = \hat{\lambda} \pi$ is derived wrt each of $i$th coordinates of the variable $\xi_{t,i}$, for simplicity it is written as $\xi_{i}$, 
\begin{align}
\label{eq_diff_svd}
\hat{Q} \frac{\partial \pi}{\partial \xi_i} + \frac{\partial \hat{Q}}{\partial \xi_i} \, \pi = \lambda \,\frac{\partial \pi}{\partial\xi_i}  + \frac{\partial \hat{\lambda}}{\partial\xi_i} \, \pi  \\
\text{s.t.} \quad \pi^{\top} \,\frac{\partial \pi}{\partial\xi_i} = 0.
\label{eq_diff_svd_v}
\end{align}

Taking into account that the matrix $\hat{Q}$ is symmetric by construction, then
\begin{equation}
\hat{Q}=\hat{Q}^{\top} \quad \implies \quad \pi^{\top}\hat{Q} = \hat{\lambda} \pi^{\top}.
\label{eq_dual}
\end{equation}

One can pre-multiply the expression (\ref{eq_diff_svd}) by $\pi^{\top}$, substitute (\ref{eq_dual}) and do some manipulations:

\begin{align}
 \pi^{\top} \hat{Q} \,\frac{\partial \pi}{\partial \xi_i} + \pi^{\top} \frac{\partial \hat{Q}}{\partial \xi_i} \, \pi &= \pi^{\top}\hat{\lambda} \,\frac{\partial \pi}{ \partial \xi_i}  +\pi^{\top} \frac{\partial \hat{\lambda}}{\partial \xi_i} \, \pi    \nonumber \\
\hat{\lambda} \underbrace{\pi^{\top}\,\frac{\partial \pi}{\partial\xi_i}}_{(\ref{eq_diff_svd_v})} + \pi^{\top} \frac{\partial \hat{Q}}{\partial\xi_i} \, \pi &= \hat{\lambda} \underbrace{\pi^{\top}\frac{\partial \pi}{\partial\xi_i}}_{(\ref{eq_diff_svd_v})}    + \, \frac{\partial \hat{\lambda}}{\partial\xi_i} \underbrace{\pi^{\top} \pi}_{k^2}    \nonumber \\
\pi^{\top} \frac{\partial \hat{Q}}{\partial\xi_i} \, \pi &= \frac{\partial \hat{\lambda} k^2}{\partial\xi_i} = \frac{\partial \hat{\lambda}_\pi}{\partial\xi_i}.
\label{eq_diff_derivation}
\end{align}

Recall from (\ref{eq_q}) that $\hat{Q} = \sum \hat{Q}_\tau$, so the only derivative that is not zero is the one corresponding to the $\hat{Q}_t$ component. One can re-arrange each of the $i$ coordinates in $\xi_t$ and finally write 
\begin{equation}
\frac{\partial \hat{\lambda}_\pi}{\partial\xi_t} =  \pi^{\top} \frac{\partial \hat{Q}_t}{\partial\xi_t} \, \pi.
\end{equation}
\end{proof}

Accordingly, the calculation of the gradient boils down to calculate the derivative of $\hat{Q}_t$ with respect to the tangent space around the element $T_t$ of the manifold. We   differentiate (\ref{eq_q_t}) such that

\begin{equation}
\frac{\partial \hat{Q}_{t}}{\partial \xi_t} = \frac{\partial\text{Exp}(\xi_t)}{\partial \xi_t} Q_t + Q_t \frac{\partial\text{Exp}(\xi_t)^{\top}}{\partial \xi_t},
\label{eq_a_left}
\end{equation}
which is obtained by applying the product rule. 

By using Lemma 1, one simply substitutes to obtain the derivative wrt the coordinate $i$:
\begin{equation}
\frac{\partial \hat{Q}_{t}}{\partial \xi_{t,i}} = G_i \cdot Q_t + Q_t \cdot G_i^{\top},
\label{eq_grad}
\end{equation}
where each of these elements is a  $4\times 4$ matrix. The gradient is then obtained after multiplying this result as in (\ref{eq_eig_gradient}) which yields a scalar value; in total a 6D vector for each of the $i=1,\ldots, 6$ coordinates.

This methodology is general for obtaining derivatives of functions w.r.t. poses in 3D and allows to have a compact result, even when the image of this function is a matrix.

\subsection{Eigen Factors, Hessian}\label{sec_hessian}

The objective of this section is to obtain an analytical solution for the Hessian matrix $\nabla_{\xi}^2\hat{\lambda}$. 
Similarly, we  follow the same systematic procedure to derive the Hessian of the EF.

\begin{thm}
The Hessian of the Eigendecomposition with respect to the poses $T_t$ and $T_{t'}$ is
\begin{align}
    \frac{\partial^2 \hat{\lambda}_{\pi}}{\partial \xi_{t,j} \partial \xi_{t',i} } =& \,2 \cdot \mathcal{H}_{\lambda}(\xi_{t,j}, \xi_{t',i}) + 2 \cdot \mathcal{H}_{\pi}({\xi_{t,j}, \xi_{t',i}}) \label{eq_hessian}\\
  \mathcal{H}_{\lambda}(\xi_{t,j}, \xi_{t,i}) =& \, \pi^\top \mathcal{H}_{ij} Q_t \pi  + \pi^\top G_{t,j} \frac{\partial  \hat{Q_{t}}}{\partial \xi_{t,i}} \pi  \label{eq_hessian_diag}\\
    \mathcal{H}_{\lambda}(\xi_{t,j}, \xi_{t',i}) =& \, 0 \qquad \forall t\neq t' \label{eq_hessian_zero}\\
   \mathcal{H}_{\pi}({\xi_{t,j}, \xi_{t',i}}) =& \, \pi^{\top} \frac{\partial  \hat{Q_t}}{\partial \xi_{t,j}} \mathcal{Q}^{-1} \frac{\partial  \hat{Q}_{t'}}{\partial \xi_{t',i}} \pi  \label{eq_hessian_dense}   \\
  \mathcal{Q}^{-1} =&  \sum_{l\neq min}^4 \frac{1}{\lambda_{min}- \lambda_l}u_l u_l^{\top}  \label{eq_pseudo_q_inv}.
\end{align}
\end{thm}
\begin{proof}
Analytical derivation is shown in Appendix \ref{app_hessian}.
\end{proof}

There are two main components in the EF Hessian expression (\ref{eq_hessian}). From the one hand, the component corresponding to the eigenvalue $\mathcal{H}_{\lambda}(\xi_{t,j}, \xi_{t,i})$, whose elements are zero for different time indexes (\ref{eq_hessian_zero}).
On the other hand, the component related to the derivative of the plane $\mathcal{H}_{\pi}({\xi_{t,j}, \xi_{t',i}})$. This second term yields a dense hessian matrix. The vector $u_l$ is the eigenvector of $Q$ as introduced in (\ref{eq_homogeneous_estimate}) for the $u_{min}$.
Next subsection will motivate the contribution of each of these components and its usage.

\subsection{Alternating Optimization}\label{sec_ef_altern}

We have presented EF as a method that first estimates planes in closed form (\ref{eq_eigen}) and in a second step EF optimizes the trajectory $\bm{T}$, effectively reducing the dimensionality of the state to be estimated, as a bilevel program.

In an {\em alternating optimization} scheme, the two set of variables, the the {\em upper} and {\em lower} level are separable, i.e. the two set of variables is not related to each other. Unfortunately, the derived solution for the trajectory shows a dependency in the Hessian w.r.t the derivatives of planes.
Therefore we propose the following approximation
\begin{equation*}
\frac{\partial \pi}{\partial \bm{T}} \approx 0.
\label{eq_alternating_deriv}
\end{equation*}

By doing this, the gradient remains the unaltered (\ref{eq_eig_gradient}) and the updated Hessian (\ref{eq_hessian}) is

\begin{equation}
\frac{\partial^2 \hat{\lambda}_{\pi}}{\partial \xi_{t,j} \partial \xi_{t,i} } \approx \,2 \cdot \mathcal{H}_{\lambda}(\xi_{t,j}, \xi_{t,i}).
\label{eq_alternating_hessian}
\end{equation}

An important property derived from this approximation, as shown in (\ref{eq_hessian_zero}),  is that the Hessian matrix is {\bf block diagonal}, which implies a fast calculation of the optimization iterations and a reduced complexity of the algorithm. 

The motivation for such an approximation is on the inspection of the matrix
$\mathcal{Q}^{-1}$. One can notice it depends roughly on the inverse of points $1/N$, as stated in  (\ref{eq_q}). So with more points and poses, the contribution of the gradient w.r.t. the plane diminishes.

This block-diagonal form could also be viewed as a quasi-Newton method \cite{nocedal2006}, which arises for the particular problem of the EF, allowing for an estimate of the optimization step size, with superior convergence rate than first order optimization methods, but less exhaustive than pure second order methods with a dense Hessian.

\subsection{Trajectory Optimization}\label{sec_ef_traj_optm}

In this section, we discuss different options for optimizing a trajectory $\bm{T} = \{T_1, \ldots,T_H\}$ or its corresponding set of vectors $\bm{\xi}$.
One can directly apply the EF gradient and Hessian at each pose in the trajectory, resulting in the joint gradient vector
\begin{equation}
\frac{\partial \hat{\lambda}_\pi }{\partial \bm{\xi}} = 
  \left[ \Big( \pi^{\top} \frac{\partial \hat{Q}}{\partial \xi_1} \pi \Big) ^{\top},
  \ldots, \Big(\pi^{\top}\frac{\partial \hat{Q}}{\partial \xi_H} \pi \Big) ^{\top} \right]^{\top}_{6H\times 1}.
\label{eq_gradient}
\end{equation}
This aggregation corresponds to the Cartesian product of the manifolds of each of the poses. In case there were not points observed from a pose, then the gradient would zero since it is not contributing to the EF cost in any mean.
The same arrangement can be done for the Hessian:
\begin{equation}
\frac{\partial^2 \hat{\lambda}_\pi }{\partial \bm{\xi}^2} = 
  \left[ \frac{\partial^2 \hat{\lambda}_{\pi}}{\partial \xi_{t,j} \partial \xi_{t',i} } \right]_{6H\times 6H}.
\label{eq_hess}
\end{equation}

The total EF cost (\ref{eq_total_cost}) can be redefined in terms of retraction function w.r.t. the joint vector of poses $\bm{\xi}$:
\begin{equation}
    \hat{C}(\bm{\xi}) = \sum_{m=1}^M \hat{\lambda}_{\pi_m}(\bm{\xi}).
    \label{eq_total_cost_ret}
\end{equation}

The optimal update of the joint vector of pose coordinates is solved by 
\begin{gather}
    \Delta \bm{\xi} = -\alpha (\nabla_{\xi}^2 \hat{C})^{-1}\nabla_{\xi}\hat{C} \\
    \mathcal{G}:=\nabla_{\xi}\hat{C} = \sum_{m}^M \frac{\partial \hat{\lambda}_{\pi_m} }{\partial \bm{\xi}} \\
    \mathcal{H} :=\nabla_{\xi}^2 \hat{C} = \sum_{m}^M \frac{\partial^2 \hat{\lambda}_{\pi_m} }{\partial \bm{\xi}^2}.
\end{gather}

This problem is a standard second order method, after applying the retraction process described above. Note it is an optimization on the manifold of $\SE3$.

\subsection{Error Invariance to the Reference Frame}

Planes are elements of the projective space $\mathbb{P}^3$  and they can be transformed from one reference frame to another, similarly to homogeneous points by RBT:
\begin{equation}
{}^{t}\pi =  {}^{t}T_g^{-\top} \cdot {}^{g}\pi,
\label{eq_plane_transf}
\end{equation}
where the plane ${}^{g}\pi$ in the global reference frame is transformed into ${}^{t}\pi$ in the local frame $t$.

If we account explicitly  all reference frames in the point to plane error in (\ref{eq_ef_plane_estimation}), where the point ${}^{t}p$ is expressed in a local coordinate and the plane is expressed in the global coordinate $g$, then
\begin{equation}
{}^{g}\pi^{\top} \cdot {}^{g}T_t \cdot {}^{t}\tilde{p} = {}^{g}\pi^{\top} \cdot {}^{g}\tilde{p} = {}^{t}\pi^{\top} \cdot {}^{t}\tilde{p}.
\label{eq_plane_transf_equiv}
\end{equation}

One can see that the error value is invariant to the choice of the reference: we could either transform any of the elements to the local reference frame or to the global frame. The EF development used the latter.

Not just that, but the planar constraint is fulfilled in any reference frame:
\begin{equation}
{}^{c}\pi^{\top} \cdot {}^{c}\tilde{p} = ( {}^{c}T_g^{-\top} {}^{g}\pi )^{\top} \cdot  {}^{c}T_g {}^{g}\tilde{p} = {}^{g}\pi^{\top} \cdot {}^{g}\tilde{p},
\label{eq_plane_equiv}
\end{equation}
where the transformations cancel out for any new reference frame $c$. The next subsection will delve deeper into this idea.

\subsection{Data Centering}\label{sec_data_center}

Since the plane constraints are of equal value for any reference frame ${}^{c}T_g$ (for simplicity we write $T_c$) we will show that the EF cost is also {\bf invariant} to transformations:

\begin{align}
    \lambda_{\pi} &=
    \pi^{\top} {Q} \pi = \pi^{\top} \underbrace{T_c^{-1} T_c}_{I}  \cdot {Q} \cdot T_c^\top  \underbrace{T_c^{-\top}\pi}_{{}^{c}\pi} \nonumber \\
    & = {}^{c}\pi^{\top} T_c  \cdot {Q} \cdot T_c^\top {}^{c}\pi.
\end{align}

The plane coordinates have changed, and the matrix ${Q}$ is also transformed into the reference frame $c$, by looking at the definition in (\ref{eq_ef_plane_estimation}) and (\ref{eq_q}).

If the error $\lambda_{\pi}$ remains constant, then one can naturally ask: Is there a reference frame more convenient than others to calculate the error? In order to answer this question, we have to analyze in more detail the elements of the matrix $Q$ (\ref{eq_q_decomposed}): $Q_p$ and $q$.
There is a relation between $q$ and the mean $\mu$ of all points from the plane, recall observed in the same global reference frame
\begin{equation}
    q = \sum_t^H \sum_n^{N_t}  T_t \cdot p_{t,n} = \sum_t^H  q_t = N \cdot \mu.
    \label{eq_q_summation}
\end{equation}
The vector $q$ is the summation of $q_t$ each the average point in global coordinates.

Similarly, the  $3\times 3$ matrix block $Q_p$ is a summation of points transformed into the same reference frame:
\begin{equation}
    Q_p = \sum_t^H \sum_n^{N_t} T_t \cdot p_{t,n} \cdot p_{t,n}^{\top} \cdot T_t^{\top}
    \label{eq_Qp_summation}
\end{equation}
which is related to the point covariance $\Sigma_p$ in (\ref{eq_center_estimate}) in the following manner:

\begin{equation}
\Sigma_p \cdot N = Q_{p} - \frac{1}{N} q \cdot q^{\top} .
\label{eq_q_cov}
\end{equation}

Now, let the transformation be a translation by a the mean value:
\begin{equation}
 T_c = \begin{bmatrix} I & -\mu \\ 0 & 1 \end{bmatrix}.
\end{equation}

In this case, the result of this transformation naturally yields the centering of the data points:
\begin{equation}
Q_c := T_c \begin{bmatrix} Q_{p} & q \\ q^{\top} & N \end{bmatrix} T_c^\top  = 
 \begin{bmatrix} Q_{p} - \frac{1}{N}q\cdot q^{\top} & 0 \\ 0 & N \end{bmatrix},
\label{eq_Q_centered}
\end{equation}
where top left block corresponds to the scaled  covariance (\ref{eq_q_cov}).

Alternatively, we could have asked: what is the transformation that applied to the plane $\pi$ yields a $d$ component zero?
Then, the solution is
\begin{equation}
\begin{bmatrix} \eta \\ d \end{bmatrix} = \begin{bmatrix} I & 0 \\ -\mu^{\top} & 1 \end{bmatrix} \begin{bmatrix} \eta \\ 0 \end{bmatrix},
\label{eq_center_plane_vector}
\end{equation}
which is valid since we already defined $d = - E\{\eta^{\top} p\}$ in (\ref{eq_center_estimate}).
In both cases, the result is the centered matrix $Q_c$. 

In practice, we have used this result to estimate the plane normal, since it provides more stable results as it will be discussed in Sec.\ref{sec_exp_numerical}. The development is however taken from the homogeneous formulation presented above, one just needs to adjust to this {\em transformation} in the gradient and Hessian calculations. Accordingly, the gradient is calculated as in (\ref{eq_eig_gradient}), and almost all components of the Hessian remain the same except for (\ref{eq_hessian_dense}) now written as

\begin{align}
   \mathcal{H}_{\pi,c}({\xi_{t,j}, \xi_{t',i}}) =& \, \pi^{\top} \frac{\partial  \hat{Q_t}}{\partial \xi_{t,j}} T_c^{\top} \mathcal{Q}_c^{-1} T_c \frac{\partial  \hat{Q}_{t'}}{\partial \xi_{t',i}} \pi \label{eq_hessian_plane_center} \\
   \mathcal{Q}_c^{-1} =& \begin{bmatrix} \sum_{l\neq min}^3 \frac{1}{\lambda_{min}- \lambda_l}v_l v_l^{\top} & 0 \\ 0 & \frac{1}{(\lambda_{min}-N)}
    \end{bmatrix}.
\label{eq_hessian_Q_plane_center}
\end{align}

The variables $v_l$ are the eigenvectors of the $3\times 3$ block matrix in (\ref{eq_Q_centered}) and $v_{min}$ corresponds to the plane normal $\eta$.
In Appendix \ref{app_center} is shown this development.

In the evaluations, we will use the method just described in this section, the EF-center, which is referred simply as EF.

\section{Experiments}\label{sec_exps}
Experiments are targeted to evaluate performance of different plane-based SLAM \textbf{back-end} approaches, formulated in the form of graph SLAM. Since the quality of the SLAM full pipeline could also be affected significantly by detection and association algorithms on the front-end, the question about quality of the full pipeline remains out of the scope of this paper and instead we focus only on the back-end. To achieve this, we consider pre-labeled data from different sensor modalities by developing our synthetic generator and using labeled data for ICL NUIM \cite{handa2014benchmark}
and KITTI \cite{geiger2013} sequences from the EVOPS benchmark~\cite{kornilova2022}.

\begin{figure*}[t!]
    \centering
    \includegraphics[width=.97\textwidth]{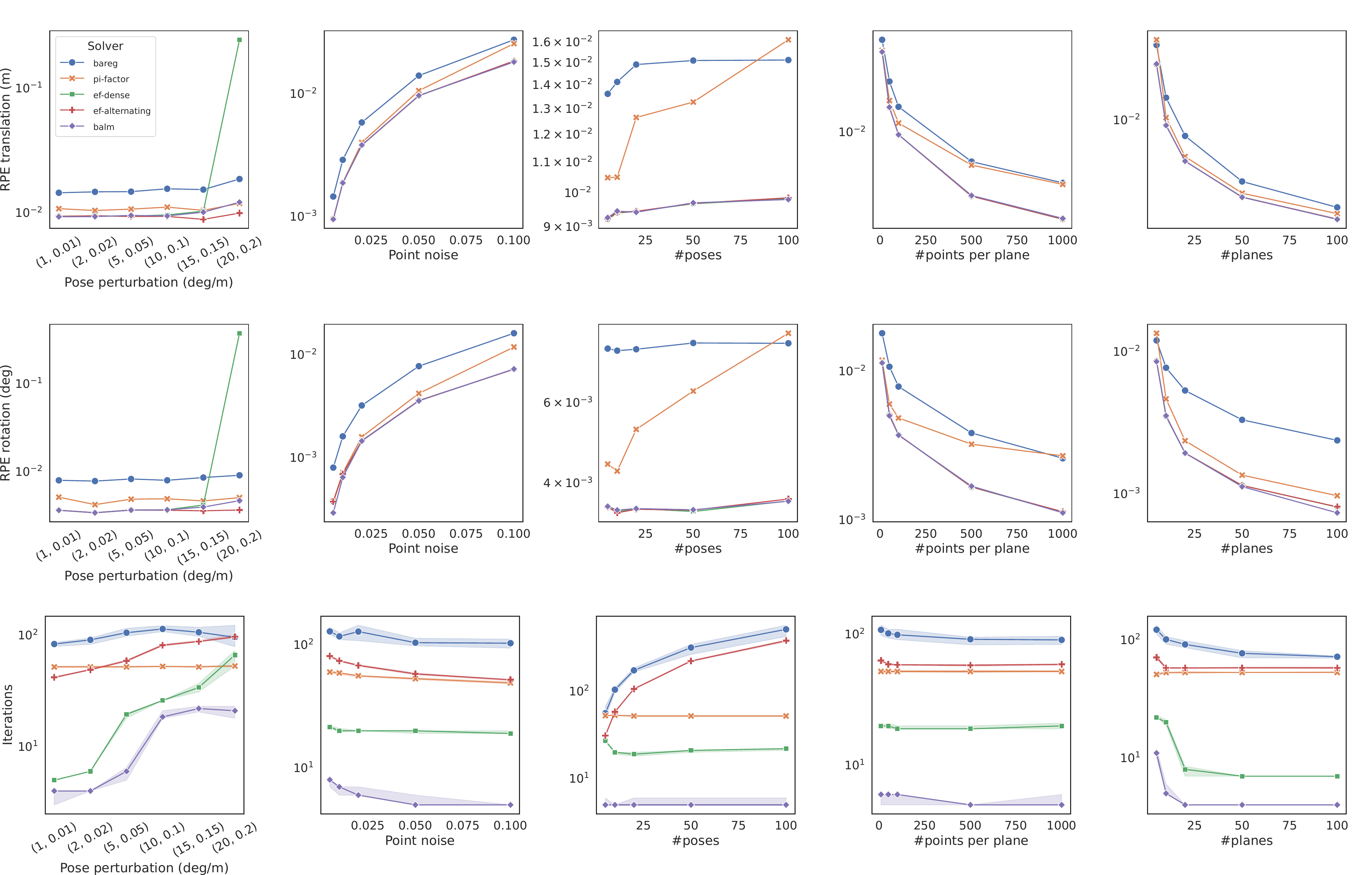}
    \caption{Evaluation results in synthetic environments with planes sweeping over such parameters as: pose perturbation, number of poses in trajectory, number of points per plane, point-to-plane noise. \emph{Top}: RPE translation error in different simulator configurations, \emph{Middle}: RPE rotation error in different simulator configurations, {\em Down}: Number of iterations to converge. Plotting shaded regions for the CI of 95\% (default).}
    \label{synth_eval}
\end{figure*}


The following approaches are taken for evaluation as the main representatives of methods on graph plane-based SLAM: BAREG~\cite{huang2021}, $\pi$-Factor~\cite{zhou2021lidar}, BALM~\cite{liu2021balm, liu2022efficient}, EigenFactor alternating (simply EF) and EF-Dense, last two are our proposed methods. BAREG, $\pi$-Factor and EigenFactor are implemented in our common framework \texttt{mrob}\footnote{\texttt{https://github.com/prime-slam/mrob}} for factor-graph optimization problems with python bindings and shared open-source for the community. In particular, we use a Levenberg-Marquadt optimizer with relative tolerance of $10^{-4}$ for all the methods and the exact same initial conditions. Since the BALM formulation is also graph SLAM, we consider the author's original implementation with default parameters~\cite{liu2022efficient}, removing the front-end part of the algorithm for a fair comparison with other methods for back-end evaluation.

\subsection{Numerical Evaluations on EF}\label{sec_exp_numerical}

Numerical evaluations provide empirical support for the choices and correct implementation of the Eigen Factors.
In total, we conduct two numerical tests: plane estimation and gradient/Hessian numerical accuracy.

The {\bf plane estimation} tests two different methods: centered (\ref{eq_center_estimate}) and homogeneous (\ref{eq_homogeneous_estimate}).
On mild conditions of noise and distance to the reference origin, both methods perform similarly. However, after adding noise and translating the data points further away from the origin, the {\em centered} method provides superior results without doubt. This is an effect of the eigendecomposition and the numerical noise introduced to the unit vector solution (variable), which is scaled by $k$ (\ref{eq_eigen}).

The {\bf numerical tests} of the gradient and Hessian confirm the correct implementation of the analytical derivations (\ref{eq_eig_gradient}), (\ref{eq_hessian}) and (\ref{eq_hessian_plane_center}). We have compared the obtained values with the numerical ones and we use percentage of the norm. For instance, for the Hessian:

\begin{equation}
 \frac{||\mathcal{H}-\mathcal{H}_{numerical}||}{||\mathcal{H}||} \times 100.
\end{equation}

For trajectories with small perturbations, the percentage norm is close to zero for both the gradient and the Hessian, showing empirically the correctness of the derivation.

\begin{figure}[]
    \centering
    \includegraphics[width=.35\textwidth]{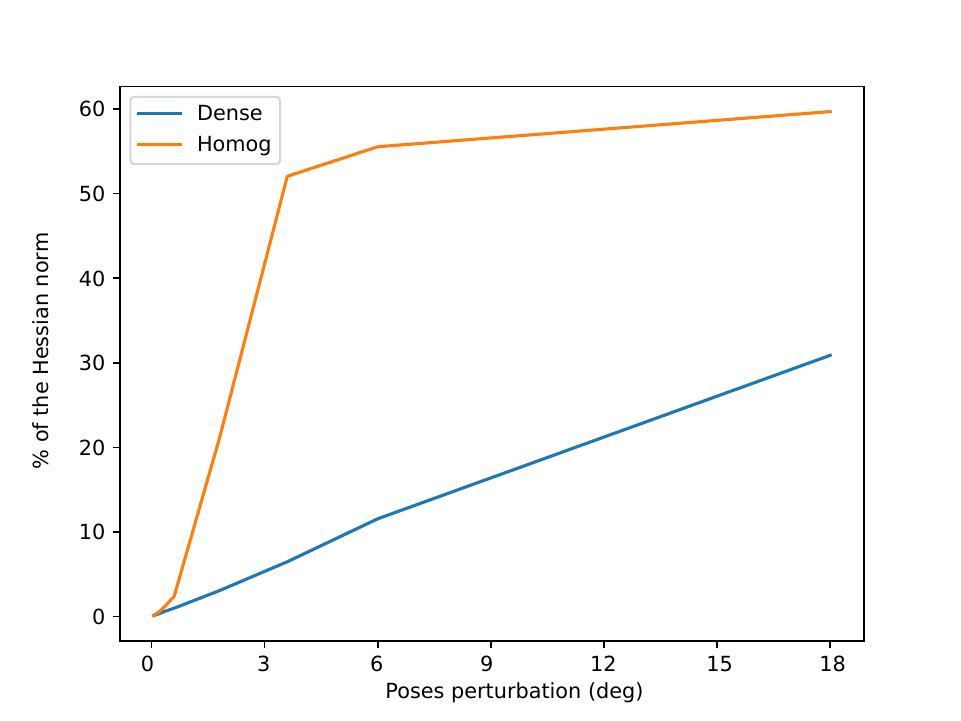}
    \caption{Hessian error, percentage of the norm after injecting noise to the trajectory. This result is for 6 planes, 10 poses and 125 points/plane.}
    \label{fig_hessian_error}
\end{figure}

When evaluating after adding more noise, the accuracy degrades, however it does differently for the Hessian using the homogeneous $\mathcal{Q}$ in (\ref{eq_pseudo_q_inv}) than the center approach $\mathcal{Q}_c$ in (\ref{eq_hessian_Q_plane_center}), as it can be seen in Fig.~\ref{fig_hessian_error}. Therefore, on the following analysis we have chosen the center approach as the default EF method.

\subsection{Complexity Analysis}

The theoretical complexity analysis is depicted in Table~\ref{method_complexity}. We have included three categories that are required at each iteration step of the optimization, assuming that planes are observed over the full sequence.

\begin{table}[h]
\caption{Complexity comparison of various plane SLAM methods.}
\label{method_complexity}
\begin{tabular}{|c||c|c|c|}
\hline
               & \# Pl. Estim. & Evaluation  & Optimization \\
\hline
BAREG            & $MH$ & $O(HM)$  & $O(H)$ \\
\hline
$\pi$-Factor           & 0 &  $O(HM)$  & $O(H^3+M)^*$ \\
\hline
BALM2          & $M$ & $O(HM)$  & $O(H^3)$ \\
\hline
EF-Dense          & $M$ & $O(HM)$  & $O(H^3)$ \\
\hline
EF          & $M$  & $O(HM)$  & $O(H)$ \\
\hline
\end{tabular}
\end{table}

First column is the number of planes to be estimated. For instance, $\pi$-Factor does not evaluate the planes, since they are part of their state variables. Other methods evaluate each plane once and BAREG needs to evaluate each plane at each pose in the trajectory.
Second column~--- {\em evaluation} refers to the complexity to evaluate the residuals for each method and calculate the gradient and Hessian if needed.
We see how all methods provide the same complexity which depends on the number of planes $M$ and poses in the trajectory $H$.

Finally, the optimization complexity is related to the structure of the problem. For instance BAREG and EF show linear complexity since they have a block diagonal matrix to invert. $\pi$-Factor estimates a larger state vector $H+M$ and requires to invert a sparse matrix (we show worst case) and Schur trick could be used to marginalize the plane landmarks $M$.
BALM2 and EF-dense require to invert a dense matrix, so they show cubic complexity.
Next section will report the runtime efficiency of each of the methods for comparison.

\subsection{Synthetic planes}

In order to evaluate the quality of planar-based SLAM back-end, we have developed a generator of synthetic environment with planes. 

\textbf{Setup.} The generator provides a set of point clouds with planes and a reference trajectory for them, and could be parameterized by number of poses, number of planes, number of points per plane, point-to-plane noise, trajectory noise. Example of generated synthetic environment is presented in Fig.~\ref{fig_syntehtic_env}. The default parameters configuration is the following: number of poses~--- 10, number of planes~--- 10, number of points per plane~--- 50, point-to-plane noise~--- 0.04 meters, trajectory perturbation per each pose~--- 0.05 meters and 5 degrees. In evaluation, we choose one parameter to be swept over different values whereas other parameters remain constant. Relative Pose Error~\cite{kummerle2009rpe} is used to estimate the accuracy of methods, rotation and translation part independently.

\begin{figure}[h]
    \centering
    \includegraphics[width=.46\textwidth]{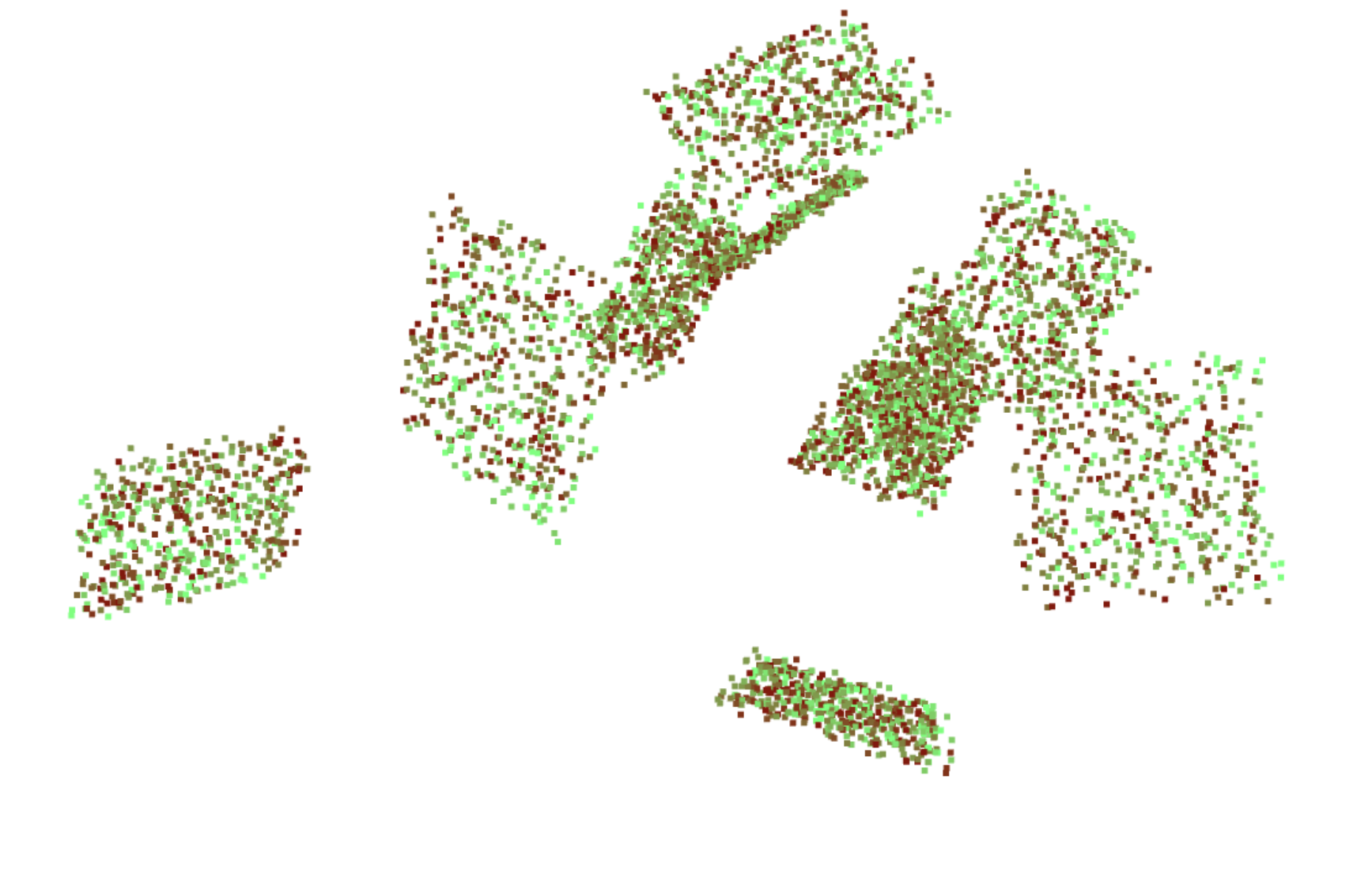}
    \caption{Synthetic Environment. Points are generated to emulate sampling over planes. Color corresponds to different pose.}
    \label{fig_syntehtic_env}
\end{figure}

\textbf{Results and discussion.} Results of evaluation are depicted in Fig.~\ref{synth_eval}. After parameters variations, the method could be broadly ordered as follows: EF (alternating), EF-Dense and BAREG show almost the same quality in RPE translation and rotation, followed closely by $\pi$-Factor. BAREG performs slightly worse under these conditions.

It is worth to note that in the trajectory perturbation experiment, EF, $\pi$-Factor and BALM show consistent results on all perturbation ranges, meaning that they show good convergence properties and could be applied for global alignment, not only local trajectory refinement. Sweeping over parameters such as number of poses, number of points per plane, point-to-plane noise, the method's quality is ranged in the same order. When considering number of iterations, BALM provides best results, although as with EF-Dense, each iteration requires  more computational resources (Table \ref{method_complexity}). EF requires more iterations at a lower computational complexity.

Figure \ref{fig_synthetic_runtime} reports the runtime results for the methods implemented under the same framework. We can observe how the total time per optimization provides a common ground for all methods, performing similarly, while the number of iterations is not a good indicator for efficiency.

\begin{figure}[h]
    \centering
    \includegraphics[width=.23\textwidth]{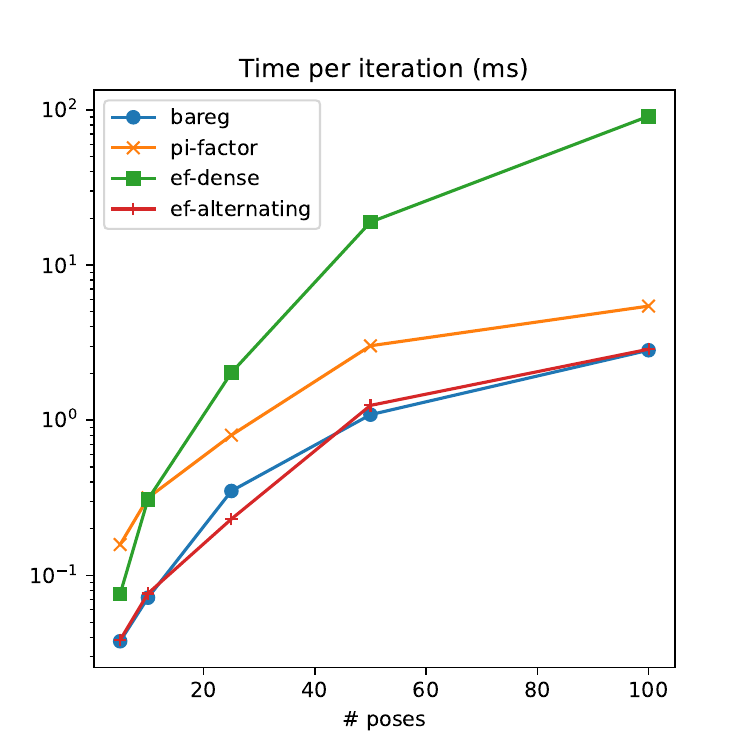}
    \includegraphics[width=.23\textwidth]{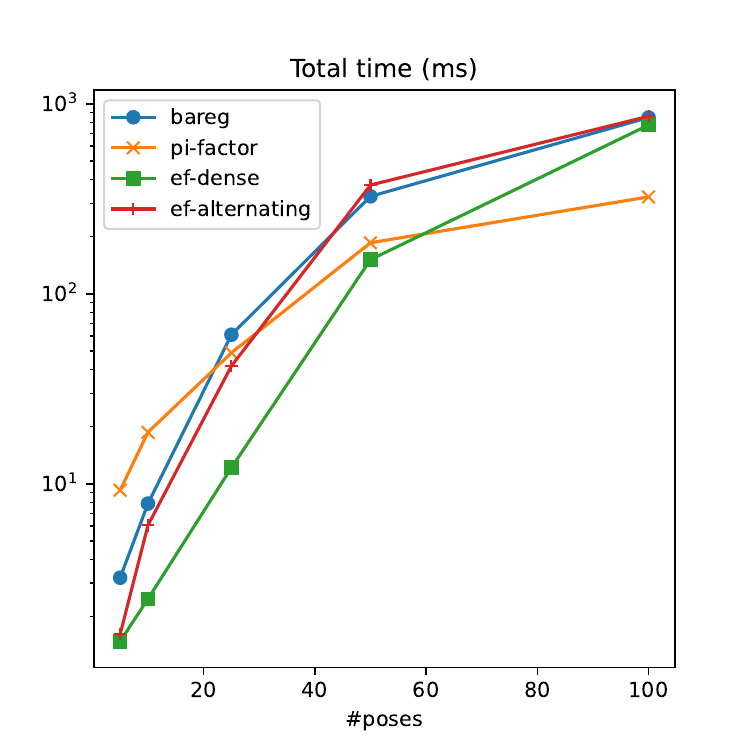}
    \caption{Synthetic environment for 25 planes. {\em Left}: Median time per iteration. {\em Right}: total median time per optimization.}
    \label{fig_synthetic_runtime}
\end{figure}

\subsection{RGBD}
\label{sec_rbgd}

\begin{figure}[h]
    \centering
    \includegraphics[width=.46\textwidth]{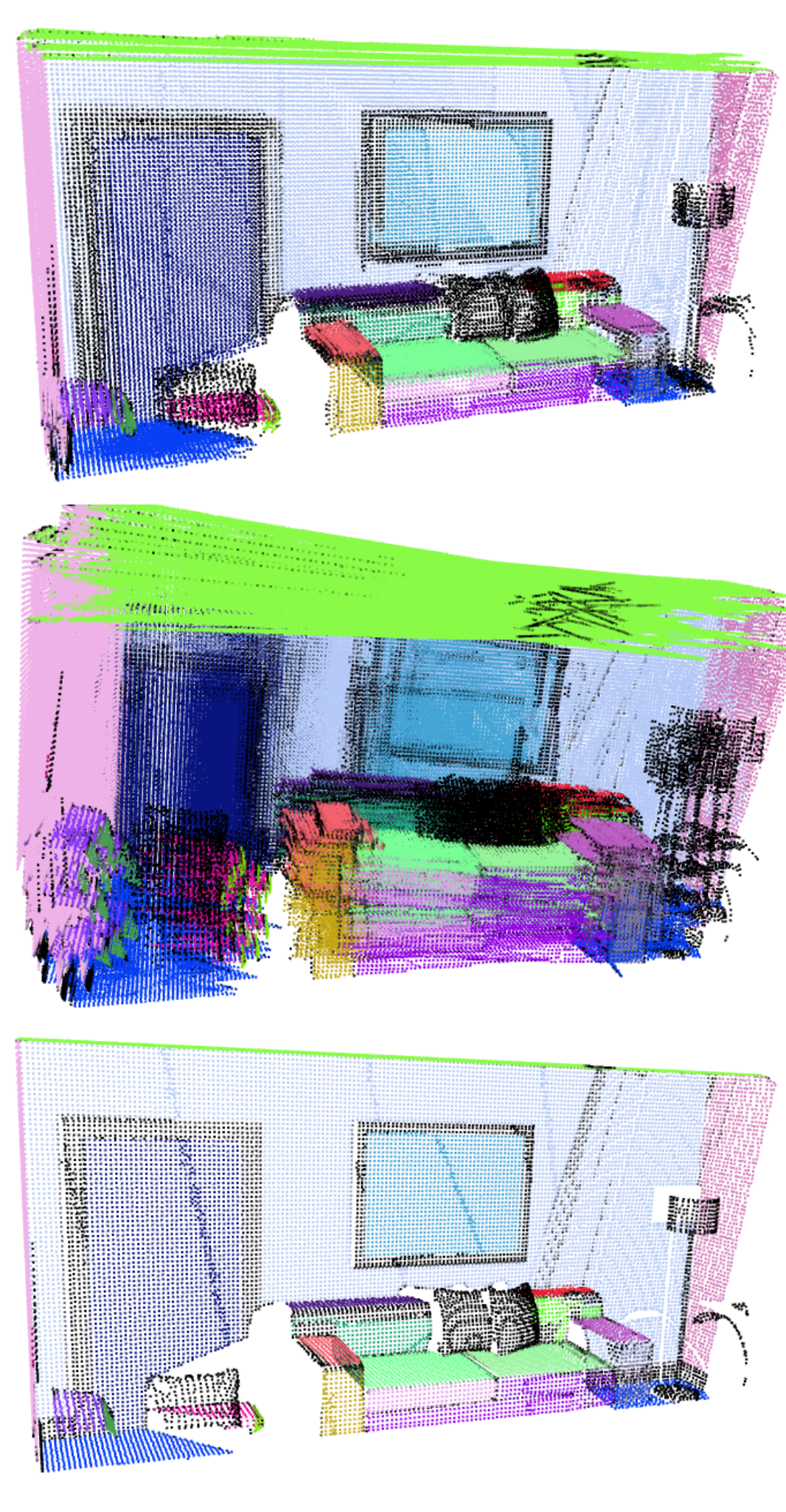}
    \caption{\emph{Top}: Example of the map aggregated from 10 poses with amount of noise 1 degree, 0.01 meter per pose. \emph{Middle}: Example of map aggregated from 10 poses with amount of noise 5 degree, 0.05 meter per pose. \emph{Down}: Result of alignment using the EF algorithm.}
    \label{fig_icl_example}
\end{figure}

The second stage of evaluation includes quality estimation on RGBD data. Since the evaluation considers only back-end processes, we take RGBD sequences from the EVOPS benchmark~\cite{kornilova2022} with pre-labeled planes. This benchmark has labeled data for ICL NUIM and TUM RGBD dataset.
TUM RGBD has only small pieces of trajectories covered by enough amount of planes, therefore only ICL NUIM data is used.

\textbf{Setup.} For evaluation, we randomly sample 50 different subsequences, whose length is 30 consecutive frames. For every subsequence, the poses in original trajectory are perturbed in the range from 0.001-0.15 degrees and 0.001-0.15 meters. Two configuration of ICL NUIM dataset are used: original and one that emulates Kinect noise. For evaluation, Relative Pose Error is used for translation and rotation parts independently.

\textbf{Results and discussion.} A qualitative example of pose perturbation and later alignment by EF is depicted in Fig.~\ref{fig_icl_example}. Since all methods give good map for visual quality assessment, only one method is chosen for visualization.
Results of evaluation with metrics are presented in Fig.~\ref{icl_eval}. On depth data without noise, all methods provide excellent results: $10^{-4}$ is a minuscule error. Surprisingly BALM provides worse results than expected, where in the similar  synthetic the method performed well. The only difference with the synthetic experiment is that planes are observed non-consecutively with poses, so some poses fail to observe a plane, while in the synthetic evaluation, all planes are observed consecutively throughout the trajectory. BAREG performs better since there are more points per plane, favoring their strategy of estimating planes at each pose.

When evaluating the data that emulates Kinect noise~--- EF shows the best performance, followed by EF-Dense up to some perturbation level where the results rapidly degrade. BAREG is performing closely and $\pi$-Factor is following on the performance metrics.
As in the synthetic experiments, EF shows an almost constant error among all ranges of perturbation, indicating good convergence guarantees in practice, while other methods show more variance, even EF-Dense. 
As a consequence, {\bf EF is the preferred method} to optimize under unknown noise conditions and close to real data. This empirical evidence might be supported by the results discussed in sec.~\ref{sec_exp_numerical}, where the process of adding noise degrades the quality of the Hessian estimated and this manifests in worse optimization performance. After all, it is well known how eigenvalues can become unstable for certain conditions, not to mention first or second order derivatives of them.
In addition, this choice is computationally efficient due to the block diagonal structure of the problem, making this result even more relevant.

\begin{figure*}[h!]
    \centering
    \includegraphics[width=.97\textwidth]{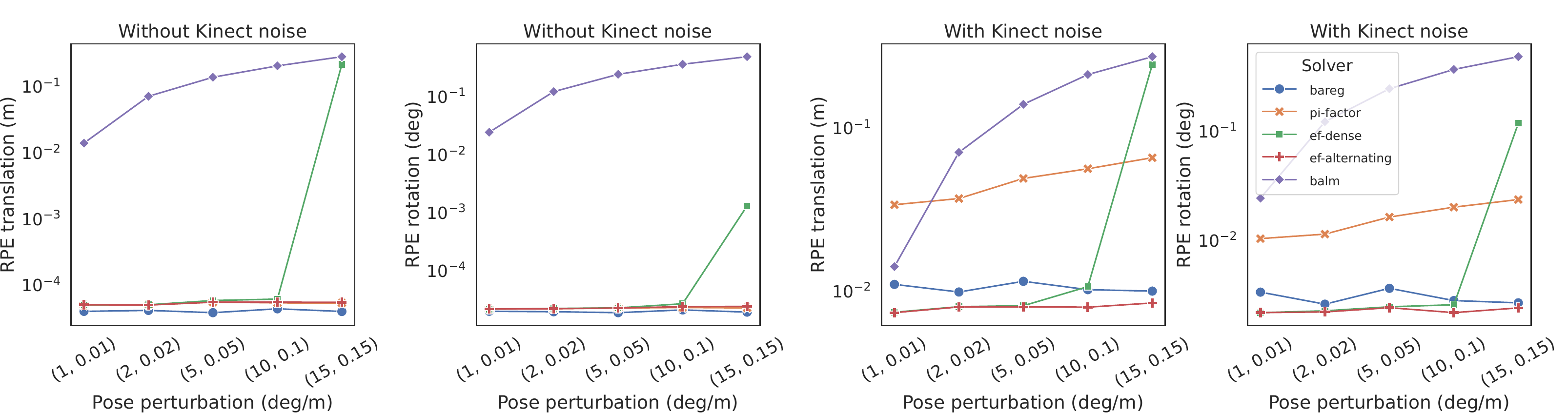}
    \caption{{\em Left:} Evaluation on ICL without noise in the point cloud with respect to perturbations in the initial poses. {\em Right:} Evaluation adding noise to the PC.}
    \label{icl_eval}
\end{figure*}





\subsection{LiDAR}

Finally, methods are evaluated on KITTI dataset with real LiDAR data.

\textbf{Setup.} Original ground truth poses of KITTI dataset are not suitable for evaluation using classical pose-based metrics, since they provide noisy data~\cite{kornilova2022}. Therefore, the following scenario is used: ground truth trajectory is used as the initial conditions for all methods, then the methods optimize over pre-labeled planes from the EVOPS benchmark. For evaluation we consider 100 submaps of length 30 poses, randomly sampled from \texttt{map-00}. To estimate the quality of alignment, no-reference map metrics MME \cite{droeschel2014local}, MPV \cite{razlaw2015evaluation} and MOM \cite{kornilova2021} are used.

\textbf{Results and discussion.} No-reference map-metrics statistics over the methods is presented in Tab.~\ref{no_ref_metrics}, BALM is not reported in the results, since it doesn't provide any convergence on those sequences. Following results, EF shows improvement of poses that gives better map quality on all metrics, qualitative demonstration of this can be found in Fig.~\ref{fig_teaser}. Other methods sometimes show improvement over ground truth poses, but not so drastically as EF. It is worth to note, that KITTI LiDAR data is noisy and EF shows good results even in those conditions, support one more time the empirical evidence of using EF (alternating) as the default method for real and uncontrolled conditions.

\begin{figure}[t!]
    \centering
    \includegraphics[width=.46\textwidth]{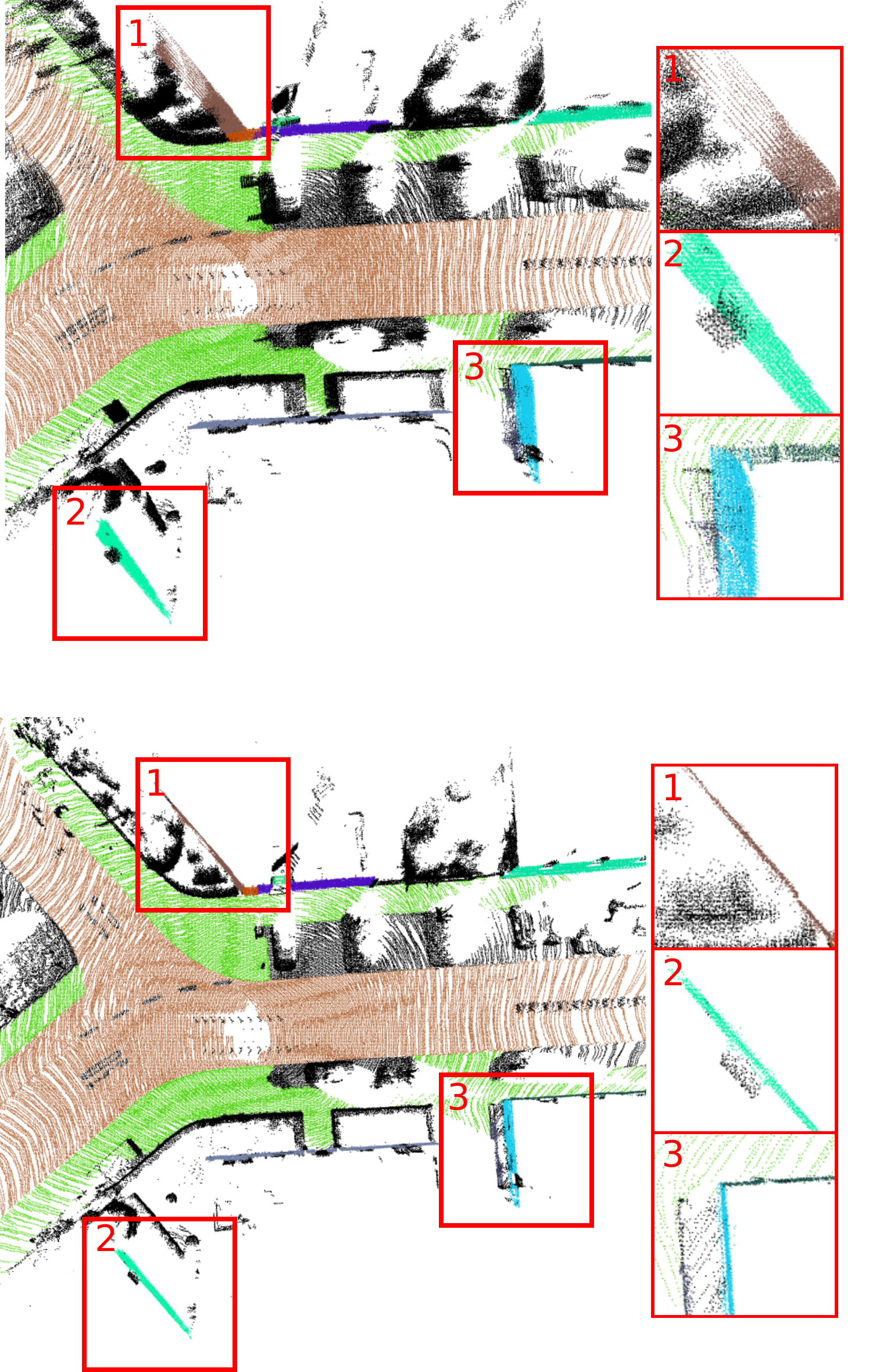}
    \caption{{\em Top:} map aggregated from KITTI ground truth poses. {\em Down:} map aggregated from poses obtained by EF.}
    \label{fig_teaser}
\end{figure}

\begin{table}[h]
\caption{No-reference map metrics on KITTI dataset}
\label{no_ref_metrics}
\begin{tabular}{|c||c|c|c|}
\hline
               & MME  	$\downarrow$ & MPV  	$\downarrow$ & MOM  	$\downarrow$ \\
\hline
BAREG            & 0.24 &  0.0276 & 0.0046 \\
\hline
$\pi$-Factor           & 0.26 & 0.0275 & 0.0028 \\
\hline
GT          & 0.23 & 0.0276  & 0.0029 \\
\hline
EF (ours)          & \textbf{0.17}  & \textbf{0.0266}  & \textbf{0.0023} \\
\hline
\end{tabular}
\end{table}

\section{Conclusions}\label{sec_concl}

This paper is an extension of our previous work on the Eigenfactors \cite{ferrer2019} where we proposed a method to calculate exactly the point error of all points in the plane at $O(1)$ complexity.
The present paper formalizes this concepts and provides an extension, 
from the perspective of manifold optimization as a bilevel program,
and derives an analytical solution for the gradient and the Hessian matrix of the EF by using a simple procedure to calculate derivatives.

We have presented the Eigen Factors method for back-end plane SLAM on two variants, EF-Dense and EF (alternating). 
Approximating the problem to {\em alternating optimization} results in a block diagonal Hessian, and consequently a very efficient solution to the problem. In addition, as an empirical evidence, we found that this approximating form, although requiring more iterations to converge shows excellent properties in synthetic data, and stands apart from other state-of-the-art plane-SLAM solutions by allowing a better convergence to a more accurate solution.

Other methods such as EF-Dense or BALM that calculate an exhaustive and accurate Hessian as a dense matrix, provide good results on low level of perturbation, unfortunately, these methods yield worse results on higher levels of perturbations and more realistic conditions, both on RGBD and LiDAR data.

A practical implication of the EF method is to be used in more realistic environments, due to its robustness compared to other methods in tasks such as map refinement given an initial estimate or allowing loop closure and correct map update, currently not available for most mapping algorithms.

All code is made publicly available at \texttt{https://github.com/prime-slam/EF-plane-SLAM} for the community  as well as python bindings.

\begin{appendices}

\section{EF Hessian derivation}\label{app_hessian}

\begin{proof}

The second derivative is obtained after deriving for each component of $\xi_{t,i}$ and $\xi_{t,j}$, written as $\xi_{i}$ and $\xi_{j}$

\begin{equation}
\frac{\partial }{\partial\xi_j}\left ( \frac{\partial \hat{\lambda}_\pi}{\partial\xi_i}  \right )  = 
\frac{\partial }{\xi_j}\left( \pi^{\top} \frac{\partial \hat{Q}_{t}}{\partial\xi_i}  \pi  \right)
\label{eq_second_derivative_1}
\end{equation}
\begin{align}
\frac{\partial^2 \hat{\lambda}_\pi}{\partial \xi_j \partial \xi_i}  & =  
\frac{\partial \pi^{\top}}{\partial\xi_j} \frac{\partial \hat{Q}_{t}}{\partial\xi_i} \pi +
\pi^\top \frac{\partial^2 \hat{Q}_t}{\partial \xi_j \partial \xi_i} \pi + 
\pi^\top \frac{\partial \hat{Q}_t}{\partial\xi_i} \frac{\partial \pi}{\partial\xi_j} \nonumber \\
& = \pi^\top \frac{\partial^2 \hat{Q}_t}{\partial \xi_j \partial \xi_i} \pi + 2 \frac{\partial \pi^{\top}}{\partial\xi_j} \frac{\partial \hat{Q}_t}{\partial\xi_i} \pi
\label{eq_second_derivative}
\end{align}

Expression (\ref{eq_second_derivative}) is composed of two main terms: the first term depends on the second derivative of the matrix $\hat{Q}_t$
and the second term 
depends on the derivative of the plane, a product of first order derivatives.

The first part:
We calculate the derivative of $\hat{Q}_t$ in (\ref{eq_q_t}) twice, by using the product rule:

\begin{gather}
\frac{\partial ^2 \hat{Q}_{t}}{\partial \xi_{t,j} \partial \xi_{t,i}} = \frac{\partial }{\partial \xi_{t,j}} \left( \frac{\partial\text{Exp}(\xi_t)}{\partial \xi_{t,i}} \hat{Q}_t + \hat{Q}_t \frac{\partial\text{Exp}(\xi_t)^{\top}}{\partial \xi_{t,i}} \right) \nonumber \\
= \frac{\partial ^2\text{Exp}(\xi_t)}{\partial \xi_{t,j} \partial \xi_{t,i}} {Q}_t + \frac{\partial\text{Exp}(\xi_t)}{\partial \xi_{t,i}}  \frac{\partial \hat{Q}_t}{\partial \xi_{t,j}} + \nonumber \\
+  \frac{\partial \hat{Q}_t}{\partial \xi_{t,i}}  \cdot \frac{\partial\text{Exp}(\xi_t)^{\top}}{\partial \xi_{t,j}} + Q_t \frac{\partial ^2\text{Exp}(\xi_t)^{\top}}{\partial \xi_{t,j} \partial \xi_{t,i}}.
\label{eq_hess_1}
\end{gather}

Now considering the multiplication by the plane $\pi$ some symmetries arise:
\begin{gather}
\pi^{\top}\frac{\partial ^2 \hat{Q}_{t}}{\partial \xi_{t,j} \partial \xi_{t,i}} \pi  \nonumber \\ = \pi^{\top} \left( 2\frac{\partial ^2\text{Exp}(\xi_t)}{\partial \xi_{t,j} \partial \xi_{t,i}} \hat{Q}_t +  2 \frac{\partial\text{Exp}(\xi_t)}{\partial \xi_{t,i}}  \frac{\partial \hat{Q}_t}{\partial \xi_{t,j}}\right) \pi \nonumber \\
 = \pi^{\top} 2 \mathcal{H}_{ij} Q_t \pi + 2 \pi^{\top} G_i \frac{\partial \hat{Q}_t}{\partial \xi_{t,j}} \pi^{\top}
\label{eq_hess}
\end{gather}

This form corresponds to (\ref{eq_hessian_diag}) and their corresponding elements in the general Hessian form in (\ref{eq_hessian}).

The condition (\ref{eq_hessian_zero}) can be easily verified by noting that this expression only depends on the matrix $Q_t$ at pose $t$ and the contribution from other poses at $t' \neq t$ vanishes after the first differentiation.

The second part is a modification from \cite{liu2022efficient} adapted to the homogeneous formulation in order to obtain the derivative of the plane $\frac{\partial \pi^{\top}}{\partial\xi_j}$. To this end, we must consider the factorization of the {\em accumulated matrix} $Q$ as

\begin{equation}
Q = U \Lambda U^{\top}  \quad \to \quad \Lambda = U^{\top} Q U,
\label{eq_q_lambda}
\end{equation}
where $U$ is an orthonomal matrix, therefore
\begin{equation}
U^{\top}  U = I \quad \to \quad  \underbrace{\frac{\partial U^\top}{\partial \xi_i} U}_{D^{\top}} +  \underbrace{U^{\top} \frac{\partial U}{\partial \xi_i}}_{D} = 0.
\end{equation}

While in the previous demonstration we considered the plane vector, when considering all the space of solutions by $U$, a new structure arises: the matrix $D$ must be skew-symetric:

\begin{equation}
D = \begin{bmatrix} 0 & d_{12} & d_{13} & d_{14}\\ 
                              -d_{12} & 0 & d_{23} & d_{24}\\
                              -d_{13} & -d_{23} & 0 & d_{34} \\
                              -d_{14} & -d_{24} & -d_{34} & 0
\end{bmatrix}	
\label{eq_matrix_D}
\end{equation}

If we re-formulate the derivation from (\ref{eq_diff_derivation}), now for matrices:
\begin{gather}
\frac{\partial \Lambda}{\partial \xi_i} = \frac{\partial U^\top}{\partial \xi_i} Q U + U^\top \frac{\partial \hat{Q}}{\partial \xi_i} U + U^\top Q \frac{\partial U}{\partial \xi_i} \nonumber \\
= \frac{\partial U^\top}{\partial \xi_i} U \Lambda + U^\top \frac{\partial \hat{Q}}{\partial \xi_i} U + \Lambda U^\top \frac{\partial U}{\partial \xi_i} \nonumber \\
 = D^{\top} \Lambda + U^\top \frac{\partial \hat{Q}}{\partial \xi_i} U + \Lambda D
\label{eq_hess_plane}
\end{gather}
where the equalities $QU=U\Lambda$ and $U^{\top}Q = \Lambda U^{\top}$ from (\ref{eq_q_lambda}) are substituted to simplify the expression in terms of $D$.
One can manipulate (\ref{eq_hess_plane}) for each of its off-diagonal elements (those that are equal to zero) and solve the equation for each element:

\begin{equation}
d_{l,min} = \frac{1}{\lambda_{min}- \lambda_l}u_l^{\top } \frac{\partial \hat{Q}_t}{\partial \xi_{t,i}} u_{min},
\label{eq_d_element}
\end{equation}
where the index of the plane solution $u_{min}$ is already set and $l$ is any other index.

\begin{equation}
D = U^{\top} \frac{\partial U}{\partial \xi_i} \quad \to \quad \frac{\partial U}{\partial \xi_i} = UD
\label{eq_D_solutoin}
\end{equation}

Since $U = [u_1,u_2,u_3,u_4]$, one can operate vectors and to obtain:

\begin{equation}
\frac{\partial U}{\partial \xi_i} = \sum_{l\neq min}^4 \frac{1}{\lambda_{min}- \lambda_l}u_l u_l^{\top} \frac{\partial \hat{Q}_t}{\partial \xi_{t,i}} u_{min}
\label{eq_vector_solutoin}
\end{equation}

Using the expression $\pi = k u_{min}$ (\ref{eq_eigen}), one can substitute in the left and right hand side:

\begin{equation}
\frac{\partial \pi}{\partial \xi_i}k = \underbrace{\sum_{l\neq min}^4 \frac{1}{\lambda_{min}- \lambda_l}u_l u_l^{\top}}_{\mathcal{Q}^{-1}} \frac{\partial \hat{Q}_t}{\partial \xi_{t,i}} \pi k,
\label{eq_plane_partial_sol}
\end{equation}
where the scalar variable $k$ cancels out on both sides.
The result obtained in (\ref{eq_second_derivative}) can then be substituted 
\begin{equation}
2 \cdot \frac{\partial \pi^{\top}}{\partial\xi_{t,j}} \frac{\partial \hat{Q}_{t'}}{\partial\xi_{t',i}} \pi = 
2 \cdot \pi^{\top} \frac{\partial  \hat{Q_t}}{\partial \xi_{t,j}} \mathcal{Q}^{-1} \frac{\partial  \hat{Q}_{t'}}{\partial \xi_{t',i}} \pi
\end{equation}
which exactly corresponds to (\ref{eq_hessian_dense}).

One can note that the expression $\mathcal{Q}^{-1}$ is in fact the inverse of $Q - \lambda_{min}u_{min} u_{min}^{\top}$ noting we are removing one degree of freedom the solution $u_{min}$, hence the notation in (\ref{eq_pseudo_q_inv}) to be an ``inverse''.

\end{proof}

\section{Approximation of the EF Center}
\label{app_center}

Let's first start by defining the centering transformation $T_c$ as 
the retraction for the particular pose at $t$:
\begin{align}
    R_{T_c}(\xi_t) &=  \begin{bmatrix}  I  & -\mu(\xi_t)  \\ 0 &1\end{bmatrix} = \begin{bmatrix}  I  &  -\frac{1}{N} \left( \sum_{\tau \neq t}^H q_{\tau} +  \Exp(\xi_{t})q_t \right)  \\ 0 & 1\end{bmatrix} \nonumber\\
    & = \begin{bmatrix}  I  & \frac{1}{N}(q_t -\Exp(\xi_t) {q}_t)   \\ 0 &1\end{bmatrix} T_c =
    \mathcal{E}(\xi_t) T_c.
\end{align}
where we have used the results from (\ref{eq_q_summation}).
Its derivative at any pose $t$ for the $i$-th coordinate is
\begin{equation}
   \frac{\partial \mathcal{E}(\xi)}{\partial \xi_i} =
   \begin{bmatrix}  0  &  -\frac{1}{N} G_i q_t \\ 0 & 0\end{bmatrix}.
   \label{eq_transl_tretaction}
\end{equation}

Now, we are interested in deriving the terms needed in the optimization, that are affected by the centering transformation. These two equalities are to consider:

\begin{equation}
Q_c = T_c Q T_c^{\top}, \qquad Q = T_c^{-1} Q_c T_c^{-\top}.
\end{equation}

As discussed earlier, only the component of the Hessian depending on the derivative of the plane (\ref{eq_hessian_Q_plane_center}) need to be recalculated using the new centered plane estimation method. Therefore, the expression 

\begin{equation}
\frac{\partial \pi}{\partial\xi_j} = \frac{\partial}{\partial\xi_j} \left( T_c^{\top}\begin{bmatrix} \eta \\ 0 \end{bmatrix}\right)
\end{equation}
can be viewed as a transformation of the plane (\ref{eq_center_plane_vector}). If we develop this expression further, we obtain two terms:

\begin{equation}
\frac{\partial \pi}{\partial\xi_j} = \frac{\partial T_c^{\top}}{\partial\xi_j} \begin{bmatrix} \eta \\ 0 \end{bmatrix} +
T_c^{\top} \frac{\partial}{\partial\xi_j}\begin{bmatrix} \eta \\ 0 \end{bmatrix} \approx T_c^{\top} \frac{\partial}{\partial\xi_j}\begin{bmatrix} \eta \\ 0 \end{bmatrix}
\label{eq_plane_transformed_deriv}
\end{equation}

This approximation is motivated after further expansion:

\begin{equation}
\frac{\partial T_c^{\top}}{\partial\xi_j} \begin{bmatrix} \eta \\ 0 \end{bmatrix} =
T_c^{\top} \frac{\partial \mathcal{E}(\xi_j)^{\top}}{\partial\xi_j} \begin{bmatrix} \eta \\ 0 \end{bmatrix} = 
\begin{bmatrix} 0 \\ -\frac{1}{N} (G_i q_t)^{\top}\eta \end{bmatrix}
\label{eq_partial_eta_center}
\end{equation}
where we observe that the derivative of the normal $\eta$ is zero and only the variable $d$ of the plane is perturbed by a small number inverse of the number of points.

To solve the second expression, we follow a similar approach than in (\ref{eq_hess_plane}). To this end, we note that the $3 \times 3$ matrix to estimate the plane (centered):

\begin{equation}
Q_{p} - \frac{1}{N}q\cdot q^{\top} = V \Lambda_c V^{\top}
\end{equation}
and if looking at the homogenous centered matrix
\begin{equation}
Q_c =  \begin{bmatrix} V\Lambda_c V^{\top} & 0 \\ 0 & N \end{bmatrix}
\end{equation}
we can extend one dimension the eigenvalues such that
\begin{equation}
U = \begin{bmatrix} v_1 & v_2 & v_3 & 0 \\ 0 & 0 & 0 & 1 \end{bmatrix}
\end{equation}
and we repeat the same procedure as in (\ref{eq_vector_solutoin}).
Therefore, the results can be grouped into:

\begin{equation}
\frac{\partial}{\partial\xi_j}\begin{bmatrix} \eta \\ 0 \end{bmatrix} = \underbrace{\begin{bmatrix} \sum_{l\neq min}^3 \frac{1}{\lambda_{min}- \lambda_l}v_l v_l^{\top} & 0 \\ 0 & \frac{1}{(\lambda_{min}-N)} 
    \end{bmatrix}}_{\mathcal{Q}_c^{-1}}
    \frac{\partial \hat{Q}_c}{\partial \xi_{j}} \begin{bmatrix} v_{min} \\ 0 \end{bmatrix}.
\label{eq_eta_deriv_center}
\end{equation}

We have obtained $\mathcal{Q}_c^{-1}$ as in (\ref{eq_hessian_Q_plane_center}).
The last step requires defining the derivative of the centered accumulated matrix $Q_c$ in terms of known variables. For that, we expand it by
\begin{equation}
\frac{\partial \hat{Q}_c}{\partial \xi_{j}}  = \frac{\partial T_c \hat{Q} T_c^{\top}}{\partial \xi_{j}} \approx T_c \frac{\partial \hat{Q}}{\partial \xi_{j}}  T_c^{\top}
\end{equation}
where we have approximated the derivatives of $T_c$ to be negligible, as motivated in (\ref{eq_partial_eta_center}).
Next, we can write the derivative of the plane using (\ref{eq_plane_transformed_deriv}) and (\ref{eq_eta_deriv_center})

\begin{equation}
\frac{\partial \pi}{\partial\xi_j} = T_c^{\top} \mathcal{Q}_c^{-1} T_c \frac{\partial \hat{Q}}{\partial\xi_j} \underbrace{T_c^{\top} \begin{bmatrix} \eta \\ 0 \end{bmatrix}}_{\pi}.
\end{equation}

One can write all components in the same expression

\begin{equation}
\frac{\partial \pi^{\top}}{\partial\xi_j} \cdot \frac{\partial  \hat{Q}_{t'}}{\partial \xi_{t',i}} \pi =  \pi^{\top} \frac{\partial  \hat{Q_t}}{\partial \xi_{t,j}} T_c^{\top} \mathcal{Q}_c^{-1} T_c \frac{\partial  \hat{Q}_{t'}}{\partial \xi_{t',i}} \pi
\end{equation}
which coincides with the expression in (\ref{eq_hessian_plane_center}) as intended.

\end{appendices}



\begin{thebibliography}{50}
\ifx \bisbn   \undefined \def \bisbn  #1{ISBN #1}\fi
\ifx \binits  \undefined \def \binits#1{#1}\fi
\ifx \bauthor  \undefined \def \bauthor#1{#1}\fi
\ifx \batitle  \undefined \def \batitle#1{#1}\fi
\ifx \bjtitle  \undefined \def \bjtitle#1{#1}\fi
\ifx \bvolume  \undefined \def \bvolume#1{\textbf{#1}}\fi
\ifx \byear  \undefined \def \byear#1{#1}\fi
\ifx \bissue  \undefined \def \bissue#1{#1}\fi
\ifx \bfpage  \undefined \def \bfpage#1{#1}\fi
\ifx \blpage  \undefined \def \blpage #1{#1}\fi
\ifx \burl  \undefined \def \burl#1{\textsf{#1}}\fi
\ifx \doiurl  \undefined \def \doiurl#1{\url{https://doi.org/#1}}\fi
\ifx \betal  \undefined \def \betal{\textit{et al.}}\fi
\ifx \binstitute  \undefined \def \binstitute#1{#1}\fi
\ifx \binstitutionaled  \undefined \def \binstitutionaled#1{#1}\fi
\ifx \bctitle  \undefined \def \bctitle#1{#1}\fi
\ifx \beditor  \undefined \def \beditor#1{#1}\fi
\ifx \bpublisher  \undefined \def \bpublisher#1{#1}\fi
\ifx \bbtitle  \undefined \def \bbtitle#1{#1}\fi
\ifx \bedition  \undefined \def \bedition#1{#1}\fi
\ifx \bseriesno  \undefined \def \bseriesno#1{#1}\fi
\ifx \blocation  \undefined \def \blocation#1{#1}\fi
\ifx \bsertitle  \undefined \def \bsertitle#1{#1}\fi
\ifx \bsnm \undefined \def \bsnm#1{#1}\fi
\ifx \bsuffix \undefined \def \bsuffix#1{#1}\fi
\ifx \bparticle \undefined \def \bparticle#1{#1}\fi
\ifx \barticle \undefined \def \barticle#1{#1}\fi
\bibcommenthead
\ifx \bconfdate \undefined \def \bconfdate #1{#1}\fi
\ifx \botherref \undefined \def \botherref #1{#1}\fi
\ifx \url \undefined \def \url#1{\textsf{#1}}\fi
\ifx \bchapter \undefined \def \bchapter#1{#1}\fi
\ifx \bbook \undefined \def \bbook#1{#1}\fi
\ifx \bcomment \undefined \def \bcomment#1{#1}\fi
\ifx \oauthor \undefined \def \oauthor#1{#1}\fi
\ifx \citeauthoryear \undefined \def \citeauthoryear#1{#1}\fi
\ifx \endbibitem  \undefined \def \endbibitem {}\fi
\ifx \bconflocation  \undefined \def \bconflocation#1{#1}\fi
\ifx \arxivurl  \undefined \def \arxivurl#1{\textsf{#1}}\fi
\csname PreBibitemsHook\endcsname

\bibitem[\protect\citeauthoryear{Dellaert and Kaess}{2006}]{dellaert2006}
\begin{barticle}
\bauthor{\bsnm{Dellaert}, \binits{F.}},
\bauthor{\bsnm{Kaess}, \binits{M.}}:
\batitle{Square root {SAM}: Simultaneous localization and mapping via square
  root information smoothing}.
\bjtitle{The International Journal of Robotics Research}
\bvolume{25}(\bissue{12}),
\bfpage{1181}--\blpage{1203}
(\byear{2006})
\end{barticle}
\endbibitem

\bibitem[\protect\citeauthoryear{Triggs et~al.}{1999}]{triggs1999bundle}
\begin{bchapter}
\bauthor{\bsnm{Triggs}, \binits{B.}},
\bauthor{\bsnm{McLauchlan}, \binits{P.F.}},
\bauthor{\bsnm{Hartley}, \binits{R.I.}},
\bauthor{\bsnm{Fitzgibbon}, \binits{A.W.}}:
\bctitle{Bundle adjustment—a modern synthesis}.
In: \bbtitle{International Workshop on Vision Algorithms},
pp. \bfpage{298}--\blpage{372}
(\byear{1999})
\end{bchapter}
\endbibitem

\bibitem[\protect\citeauthoryear{Weingarten and
  Siegwart}{2006}]{weingarten2006}
\begin{bchapter}
\bauthor{\bsnm{Weingarten}, \binits{J.}},
\bauthor{\bsnm{Siegwart}, \binits{R.}}:
\bctitle{{3D SLAM} using planar segments}.
In: \bbtitle{IEEE/RSJ International Conference on Intelligent Robots and
  Systems},
pp. \bfpage{3062}--\blpage{3067}
(\byear{2006}).
\bcomment{IEEE}
\end{bchapter}
\endbibitem

\bibitem[\protect\citeauthoryear{Trevor et~al.}{2012}]{trevor2012}
\begin{bchapter}
\bauthor{\bsnm{Trevor}, \binits{A.J.}},
\bauthor{\bsnm{Rogers}, \binits{J.G.}},
\bauthor{\bsnm{Christensen}, \binits{H.I.}}:
\bctitle{{Planar surface SLAM with 3D and 2D sensors}}.
In: \bbtitle{2012 IEEE International Conference on Robotics and Automation},
pp. \bfpage{3041}--\blpage{3048}
(\byear{2012}).
\bcomment{IEEE}
\end{bchapter}
\endbibitem

\bibitem[\protect\citeauthoryear{Ferrer}{2019}]{ferrer2019}
\begin{bchapter}
\bauthor{\bsnm{Ferrer}, \binits{G.}}:
\bctitle{{Eigen-Factors: Plane Estimation for Multi-Frame and Time-Continuous
  Point Cloud Alignment}}.
In: \bbtitle{IEEE/RSJ International Conference on Intelligent Robots and
  Systems},
pp. \bfpage{1278}--\blpage{1284}
(\byear{2019})
\end{bchapter}
\endbibitem

\bibitem[\protect\citeauthoryear{Colson et~al.}{2007}]{colson2007overview}
\begin{barticle}
\bauthor{\bsnm{Colson}, \binits{B.}},
\bauthor{\bsnm{Marcotte}, \binits{P.}},
\bauthor{\bsnm{Savard}, \binits{G.}}:
\batitle{An overview of bilevel optimization}.
\bjtitle{Annals of operations research}
\bvolume{153},
\bfpage{235}--\blpage{256}
(\byear{2007})
\end{barticle}
\endbibitem

\bibitem[\protect\citeauthoryear{Chen and Medioni}{1991}]{chen1991}
\begin{bchapter}
\bauthor{\bsnm{Chen}, \binits{Y.}},
\bauthor{\bsnm{Medioni}, \binits{G.}}:
\bctitle{Object modeling by registration of multiple range images}.
In: \bbtitle{IEEE International Conference on Robotics and Automation},
pp. \bfpage{2724}--\blpage{2729}
(\byear{1991}).
\bcomment{IEEE}
\end{bchapter}
\endbibitem

\bibitem[\protect\citeauthoryear{Besl and McKay}{1992}]{besl1992}
\begin{barticle}
\bauthor{\bsnm{Besl}, \binits{P.J.}},
\bauthor{\bsnm{McKay}, \binits{N.D.}}:
\batitle{A method for registration of {3-D} shapes}.
\bjtitle{IEEE Transactions on pattern analysis and machine intelligence}
\bvolume{14}(\bissue{2}),
\bfpage{239}--\blpage{256}
(\byear{1992})
\end{barticle}
\endbibitem

\bibitem[\protect\citeauthoryear{Zhang}{1994}]{zhang1994}
\begin{barticle}
\bauthor{\bsnm{Zhang}, \binits{Z.}}:
\batitle{Iterative point matching for registration of free-form curves and
  surfaces}.
\bjtitle{International journal of computer vision}
\bvolume{13}(\bissue{2}),
\bfpage{119}--\blpage{152}
(\byear{1994})
\end{barticle}
\endbibitem

\bibitem[\protect\citeauthoryear{Rusinkiewicz and
  Levoy}{2001}]{rusinkiewicz2001}
\begin{bchapter}
\bauthor{\bsnm{Rusinkiewicz}, \binits{S.}},
\bauthor{\bsnm{Levoy}, \binits{M.}}:
\bctitle{Efficient variants of the {ICP} algorithm}.
In: \bbtitle{Proceedings Third International Conference on 3-D Digital Imaging
  and Modeling},
pp. \bfpage{145}--\blpage{152}
(\byear{2001}).
\bcomment{IEEE}
\end{bchapter}
\endbibitem

\bibitem[\protect\citeauthoryear{Horn}{1987}]{horn1987}
\begin{barticle}
\bauthor{\bsnm{Horn}, \binits{B.K.}}:
\batitle{Closed-form solution of absolute orientation using unit quaternions}.
\bjtitle{Journal of the Optical Society of America A}
\bvolume{4}(\bissue{4}),
\bfpage{629}--\blpage{642}
(\byear{1987})
\end{barticle}
\endbibitem

\bibitem[\protect\citeauthoryear{Arun et~al.}{1987}]{arun1987}
\begin{botherref}
\oauthor{\bsnm{Arun}, \binits{K.S.}},
\oauthor{\bsnm{Huang}, \binits{T.S.}},
\oauthor{\bsnm{Blostein}, \binits{S.D.}}:
Least-squares fitting of two {3-D} point sets.
IEEE Transactions on pattern analysis and machine intelligence
(5),
698--700
(1987)
\end{botherref}
\endbibitem

\bibitem[\protect\citeauthoryear{Umeyama}{1991}]{umeyama1991}
\begin{barticle}
\bauthor{\bsnm{Umeyama}, \binits{S.}}:
\batitle{Least-squares estimation of transformation parameters between two
  point patterns}.
\bjtitle{IEEE Transactions on pattern analysis and machine intelligence}
\bvolume{13}(\bissue{4}),
\bfpage{376}--\blpage{380}
(\byear{1991})
\end{barticle}
\endbibitem

\bibitem[\protect\citeauthoryear{Censi}{2008}]{censi2008}
\begin{bchapter}
\bauthor{\bsnm{Censi}, \binits{A.}}:
\bctitle{An icp variant using a point-to-line metric}.
In: \bbtitle{IEEE International Conference on Robotics and Automation},
pp. \bfpage{19}--\blpage{25}
(\byear{2008})
\end{bchapter}
\endbibitem

\bibitem[\protect\citeauthoryear{Segal et~al.}{2009}]{segal2009}
\begin{bchapter}
\bauthor{\bsnm{Segal}, \binits{A.}},
\bauthor{\bsnm{Haehnel}, \binits{D.}},
\bauthor{\bsnm{Thrun}, \binits{S.}}:
\bctitle{Generalized-{ICP}}.
In: \bbtitle{Robotics: Science and Systems},
vol. \bseriesno{2}
(\byear{2009})
\end{bchapter}
\endbibitem

\bibitem[\protect\citeauthoryear{Serafin and Grisetti}{2015}]{serafin2015}
\begin{bchapter}
\bauthor{\bsnm{Serafin}, \binits{J.}},
\bauthor{\bsnm{Grisetti}, \binits{G.}}:
\bctitle{{NICP}: Dense normal based point cloud registration}.
In: \bbtitle{IEEE/RSJ International Conference on Intelligent Robots and
  Systems (IROS)},
pp. \bfpage{742}--\blpage{749}
(\byear{2015})
\end{bchapter}
\endbibitem

\bibitem[\protect\citeauthoryear{Bergevin et~al.}{1996}]{bergevin1996towards}
\begin{barticle}
\bauthor{\bsnm{Bergevin}, \binits{R.}},
\bauthor{\bsnm{Soucy}, \binits{M.}},
\bauthor{\bsnm{Gagnon}, \binits{H.}},
\bauthor{\bsnm{Laurendeau}, \binits{D.}}:
\batitle{Towards a general multi-view registration technique}.
\bjtitle{IEEE Transactions on Pattern Analysis and Machine Intelligence}
\bvolume{18}(\bissue{5}),
\bfpage{540}--\blpage{547}
(\byear{1996})
\end{barticle}
\endbibitem

\bibitem[\protect\citeauthoryear{Pulli}{1999}]{pulli1999multiview}
\begin{bchapter}
\bauthor{\bsnm{Pulli}, \binits{K.}}:
\bctitle{Multiview registration for large data sets}.
In: \bbtitle{Second International Conference on 3-d Digital Imaging and
  Modeling (cat. No. Pr00062)},
pp. \bfpage{160}--\blpage{168}
(\byear{1999}).
\bcomment{IEEE}
\end{bchapter}
\endbibitem

\bibitem[\protect\citeauthoryear{Kaess}{2015}]{kaess2015}
\begin{bchapter}
\bauthor{\bsnm{Kaess}, \binits{M.}}:
\bctitle{Simultaneous localization and mapping with infinite planes}.
In: \bbtitle{IEEE International Conference on Robotics and Automation (ICRA)},
pp. \bfpage{4605}--\blpage{4611}
(\byear{2015})
\end{bchapter}
\endbibitem

\bibitem[\protect\citeauthoryear{Geneva et~al.}{2018}]{geneva2018}
\begin{bchapter}
\bauthor{\bsnm{Geneva}, \binits{P.}},
\bauthor{\bsnm{Eckenhoff}, \binits{K.}},
\bauthor{\bsnm{Yang}, \binits{Y.}},
\bauthor{\bsnm{Huang}, \binits{G.}}:
\bctitle{Lips: Lidar-inertial 3d plane slam}.
In: \bbtitle{IEEE/RSJ International Conference on Intelligent Robots and
  Systems (IROS)},
pp. \bfpage{123}--\blpage{130}
(\byear{2018}).
\bcomment{IEEE}
\end{bchapter}
\endbibitem

\bibitem[\protect\citeauthoryear{Zhang et~al.}{2019}]{zhang2019}
\begin{botherref}
\oauthor{\bsnm{Zhang}, \binits{X.}},
\oauthor{\bsnm{Wang}, \binits{W.}},
\oauthor{\bsnm{Qi}, \binits{X.}},
\oauthor{\bsnm{Liao}, \binits{Z.}},
\oauthor{\bsnm{Wei}, \binits{R.}}:
Point-plane slam using supposed planes for indoor environments.
Sensors
\textbf{19}(17)
(2019)
\doiurl{10.3390/s19173795}
\end{botherref}
\endbibitem

\bibitem[\protect\citeauthoryear{Taguchi et~al.}{2013}]{taguchi2013}
\begin{bchapter}
\bauthor{\bsnm{Taguchi}, \binits{Y.}},
\bauthor{\bsnm{Jian}, \binits{Y.-D.}},
\bauthor{\bsnm{Ramalingam}, \binits{S.}},
\bauthor{\bsnm{Feng}, \binits{C.}}:
\bctitle{Point-plane {SLAM} for hand-held {3D} sensors}.
In: \bbtitle{Robotics and Automation (ICRA), 2013 IEEE International Conference
  On},
pp. \bfpage{5182}--\blpage{5189}
(\byear{2013}).
\bcomment{IEEE}
\end{bchapter}
\endbibitem

\bibitem[\protect\citeauthoryear{Zhou et~al.}{2021}]{zhou2021lidar}
\begin{barticle}
\bauthor{\bsnm{Zhou}, \binits{L.}},
\bauthor{\bsnm{Koppel}, \binits{D.}},
\bauthor{\bsnm{Kaess}, \binits{M.}}:
\batitle{Lidar slam with plane adjustment for indoor environment}.
\bjtitle{IEEE Robotics and Automation Letters}
\bvolume{6}(\bissue{4}),
\bfpage{7073}--\blpage{7080}
(\byear{2021})
\end{barticle}
\endbibitem

\bibitem[\protect\citeauthoryear{Zhang and Singh}{2014}]{zhang2014}
\begin{bchapter}
\bauthor{\bsnm{Zhang}, \binits{J.}},
\bauthor{\bsnm{Singh}, \binits{S.}}:
\bctitle{Loam: Lidar odometry and mapping in real-time.}
In: \bbtitle{Robotics: Science and Systems},
vol. \bseriesno{2}
(\byear{2014})
\end{bchapter}
\endbibitem

\bibitem[\protect\citeauthoryear{Zhang and Singh}{2017}]{zhang2017}
\begin{barticle}
\bauthor{\bsnm{Zhang}, \binits{J.}},
\bauthor{\bsnm{Singh}, \binits{S.}}:
\batitle{Low-drift and real-time lidar odometry and mapping}.
\bjtitle{Autonomous Robots}
\bvolume{41}(\bissue{2}),
\bfpage{401}--\blpage{416}
(\byear{2017})
\end{barticle}
\endbibitem

\bibitem[\protect\citeauthoryear{Ye et~al.}{2019}]{ye2019}
\begin{bchapter}
\bauthor{\bsnm{Ye}, \binits{H.}},
\bauthor{\bsnm{Chen}, \binits{Y.}},
\bauthor{\bsnm{Liu}, \binits{M.}}:
\bctitle{Tightly coupled 3d lidar inertial odometry and mapping}.
In: \bbtitle{International Conference on Robotics and Automation (ICRA)},
pp. \bfpage{3144}--\blpage{3150}
(\byear{2019}).
\bcomment{IEEE}
\end{bchapter}
\endbibitem

\bibitem[\protect\citeauthoryear{Whelan et~al.}{2016}]{whelan2016}
\begin{barticle}
\bauthor{\bsnm{Whelan}, \binits{T.}},
\bauthor{\bsnm{Salas-Moreno}, \binits{R.F.}},
\bauthor{\bsnm{Glocker}, \binits{B.}},
\bauthor{\bsnm{Davison}, \binits{A.J.}},
\bauthor{\bsnm{Leutenegger}, \binits{S.}}:
\batitle{Elasticfusion: Real-time dense slam and light source estimation}.
\bjtitle{The International Journal of Robotics Research}
\bvolume{35}(\bissue{14}),
\bfpage{1697}--\blpage{1716}
(\byear{2016})
\end{barticle}
\endbibitem

\bibitem[\protect\citeauthoryear{Behley and Stachniss}{2018}]{behley2018}
\begin{bchapter}
\bauthor{\bsnm{Behley}, \binits{J.}},
\bauthor{\bsnm{Stachniss}, \binits{C.}}:
\bctitle{{Efficient Surfel-Based SLAM using 3D Laser Range Data in Urban
  Environments}}.
In: \bbtitle{Robotics: Science and Systems}
(\byear{2018})
\end{bchapter}
\endbibitem

\bibitem[\protect\citeauthoryear{Zhou et~al.}{2021}]{zhou2021pi}
\begin{bchapter}
\bauthor{\bsnm{Zhou}, \binits{L.}},
\bauthor{\bsnm{Wang}, \binits{S.}},
\bauthor{\bsnm{Kaess}, \binits{M.}}:
\bctitle{$\pi$-lsam: Lidar smoothing and mapping with planes}.
In: \bbtitle{2021 IEEE International Conference on Robotics and Automation
  (ICRA)},
pp. \bfpage{5751}--\blpage{5757}
(\byear{2021}).
\bcomment{IEEE}
\end{bchapter}
\endbibitem

\bibitem[\protect\citeauthoryear{Huang et~al.}{2021}]{huang2021}
\begin{barticle}
\bauthor{\bsnm{Huang}, \binits{H.}},
\bauthor{\bsnm{Sun}, \binits{Y.}},
\bauthor{\bsnm{Wu}, \binits{J.}},
\bauthor{\bsnm{Jiao}, \binits{J.}},
\bauthor{\bsnm{Hu}, \binits{X.}},
\bauthor{\bsnm{Zheng}, \binits{L.}},
\bauthor{\bsnm{Wang}, \binits{L.}},
\bauthor{\bsnm{Liur}, \binits{M.}}:
\batitle{On bundle adjustment for multiview point cloud registration}.
\bjtitle{IEEE Robotics and Automation Letters (RAL)}
\bvolume{6}(\bissue{4}),
\bfpage{8269}--\blpage{8276}
(\byear{2021})
\end{barticle}
\endbibitem

\bibitem[\protect\citeauthoryear{Liu et~al.}{2022}]{liu2022efficient}
\begin{botherref}
\oauthor{\bsnm{Liu}, \binits{Z.}},
\oauthor{\bsnm{Liu}, \binits{X.}},
\oauthor{\bsnm{Zhang}, \binits{F.}}:
Efficient and consistent bundle adjustment on lidar point clouds.
arXiv preprint arXiv:2209.08854
(2022)
\end{botherref}
\endbibitem

\bibitem[\protect\citeauthoryear{Liu and Zhang}{2021}]{liu2021balm}
\begin{barticle}
\bauthor{\bsnm{Liu}, \binits{Z.}},
\bauthor{\bsnm{Zhang}, \binits{F.}}:
\batitle{{BALM}: Bundle adjustment for lidar mapping}.
\bjtitle{IEEE Robotics and Automation Letters}
\bvolume{6}(\bissue{2}),
\bfpage{3184}--\blpage{3191}
(\byear{2021})
\end{barticle}
\endbibitem

\bibitem[\protect\citeauthoryear{{Tian} et~al.}{2017}]{tian2017}
\begin{bchapter}
\bauthor{\bsnm{{Tian}}, \binits{D.}},
\bauthor{\bsnm{{Ochimizu}}, \binits{H.}},
\bauthor{\bsnm{{Feng}}, \binits{C.}},
\bauthor{\bsnm{{Cohen}}, \binits{R.}},
\bauthor{\bsnm{{Vetro}}, \binits{A.}}:
\bctitle{Geometric distortion metrics for point cloud compression}.
In: \bbtitle{IEEE International Conference on Image Processing (ICIP)},
pp. \bfpage{3460}--\blpage{3464}
(\byear{2017}).
\doiurl{10.1109/ICIP.2017.8296925}
\end{bchapter}
\endbibitem

\bibitem[\protect\citeauthoryear{{Alexiou} and {Ebrahimi}}{2018}]{alexiou2018}
\begin{bchapter}
\bauthor{\bsnm{{Alexiou}}, \binits{E.}},
\bauthor{\bsnm{{Ebrahimi}}, \binits{T.}}:
\bctitle{Point cloud quality assessment metric based on angular similarity}.
In: \bbtitle{2018 IEEE International Conference on Multimedia and Expo (ICME)},
pp. \bfpage{1}--\blpage{6}
(\byear{2018}).
\doiurl{10.1109/ICME.2018.8486512}
\end{bchapter}
\endbibitem

\bibitem[\protect\citeauthoryear{Torlig et~al.}{2018}]{torlig2018}
\begin{bchapter}
\bauthor{\bsnm{Torlig}, \binits{E.M.}},
\bauthor{\bsnm{Alexiou}, \binits{E.}},
\bauthor{\bsnm{Fonseca}, \binits{T.A.}},
\bauthor{\bsnm{Queiroz}, \binits{R.L.}},
\bauthor{\bsnm{Ebrahimi}, \binits{T.}}:
\bctitle{{A novel methodology for quality assessment of voxelized point
  clouds}}.
In: \beditor{\bsnm{Tescher}, \binits{A.G.}} (ed.)
\bbtitle{Applications of Digital Image Processing XLI},
vol. \bseriesno{10752},
pp. \bfpage{174}--\blpage{190}
(\byear{2018}).
\doiurl{10.1117/12.2322741} .
\bcomment{SPIE}.
\burl{https://doi.org/10.1117/12.2322741}
\end{bchapter}
\endbibitem

\bibitem[\protect\citeauthoryear{{Meynet} et~al.}{2019}]{meynet2019}
\begin{bchapter}
\bauthor{\bsnm{{Meynet}}, \binits{G.}},
\bauthor{\bsnm{{Digne}}, \binits{J.}},
\bauthor{\bsnm{{Lavoué}}, \binits{G.}}:
\bctitle{{PC-MSDM}: A quality metric for 3d point clouds}.
In: \bbtitle{Eleventh International Conference on Quality of Multimedia
  Experience (QoMEX)},
pp. \bfpage{1}--\blpage{3}
(\byear{2019}).
\doiurl{10.1109/QoMEX.2019.8743313}
\end{bchapter}
\endbibitem

\bibitem[\protect\citeauthoryear{{Alexiou} and {Ebrahimi}}{2020}]{alexiou2020}
\begin{bchapter}
\bauthor{\bsnm{{Alexiou}}, \binits{E.}},
\bauthor{\bsnm{{Ebrahimi}}, \binits{T.}}:
\bctitle{Towards a point cloud structural similarity metric}.
In: \bbtitle{2020 IEEE International Conference on Multimedia Expo Workshops
  (ICMEW)},
pp. \bfpage{1}--\blpage{6}
(\byear{2020}).
\doiurl{10.1109/ICMEW46912.2020.9106005}
\end{bchapter}
\endbibitem

\bibitem[\protect\citeauthoryear{Droeschel et~al.}{2014}]{droeschel2014local}
\begin{bchapter}
\bauthor{\bsnm{Droeschel}, \binits{D.}},
\bauthor{\bsnm{St{\"u}ckler}, \binits{J.}},
\bauthor{\bsnm{Behnke}, \binits{S.}}:
\bctitle{Local multi-resolution representation for 6d motion estimation and
  mapping with a continuously rotating 3d laser scanner}.
In: \bbtitle{IEEE International Conference on Robotics and Automation (ICRA)},
pp. \bfpage{5221}--\blpage{5226}
(\byear{2014})
\end{bchapter}
\endbibitem

\bibitem[\protect\citeauthoryear{Razlaw et~al.}{2015}]{razlaw2015evaluation}
\begin{bchapter}
\bauthor{\bsnm{Razlaw}, \binits{J.}},
\bauthor{\bsnm{Droeschel}, \binits{D.}},
\bauthor{\bsnm{Holz}, \binits{D.}},
\bauthor{\bsnm{Behnke}, \binits{S.}}:
\bctitle{Evaluation of registration methods for sparse 3d laser scans}.
In: \bbtitle{European Conference on Mobile Robots (ECMR)},
pp. \bfpage{1}--\blpage{7}
(\byear{2015})
\end{bchapter}
\endbibitem

\bibitem[\protect\citeauthoryear{Kornilova and Ferrer}{2021}]{kornilova2021}
\begin{bchapter}
\bauthor{\bsnm{Kornilova}, \binits{A.}},
\bauthor{\bsnm{Ferrer}, \binits{G.}}:
\bctitle{Be your own benchmark: No-reference trajectory metric on registered
  point clouds}.
In: \bbtitle{European Conference on Mobile Robotics (ECMR)}
(\byear{2021})
\end{bchapter}
\endbibitem

\bibitem[\protect\citeauthoryear{Klasing et~al.}{2009}]{klasing2009}
\begin{bchapter}
\bauthor{\bsnm{Klasing}, \binits{K.}},
\bauthor{\bsnm{Althoff}, \binits{D.}},
\bauthor{\bsnm{Wollherr}, \binits{D.}},
\bauthor{\bsnm{Buss}, \binits{M.}}:
\bctitle{Comparison of surface normal estimation methods for range sensing
  applications}.
In: \bbtitle{IEEE International Conference on Robotics and Automation},
pp. \bfpage{3206}--\blpage{3211}
(\byear{2009})
\end{bchapter}
\endbibitem

\bibitem[\protect\citeauthoryear{Absil et~al.}{2008}]{absil2009}
\begin{bbook}
\bauthor{\bsnm{Absil}, \binits{P.-A.}},
\bauthor{\bsnm{Mahony}, \binits{R.}},
\bauthor{\bsnm{Sepulchre}, \binits{R.}}:
\bbtitle{Optimization Algorithms on Matrix Manifolds}.
\bpublisher{Princeton University Press},
\blocation{New Jersey}
(\byear{2008})
\end{bbook}
\endbibitem

\bibitem[\protect\citeauthoryear{Nocedal and Wright}{2006}]{nocedal2006}
\begin{bbook}
\bauthor{\bsnm{Nocedal}, \binits{J.}},
\bauthor{\bsnm{Wright}, \binits{S.}}:
\bbtitle{Numerical Optimization}.
\bpublisher{Springer},
\blocation{New York}
(\byear{2006})
\end{bbook}
\endbibitem

\bibitem[\protect\citeauthoryear{Lynch and Park}{2017}]{lynch2017}
\begin{bbook}
\bauthor{\bsnm{Lynch}, \binits{K.M.}},
\bauthor{\bsnm{Park}, \binits{F.C.}}:
\bbtitle{Modern Robotics}.
\bpublisher{Cambridge University Press},
\blocation{Cambridge}
(\byear{2017})
\end{bbook}
\endbibitem

\bibitem[\protect\citeauthoryear{Barfoot}{2017}]{barfoot2017}
\begin{bbook}
\bauthor{\bsnm{Barfoot}, \binits{T.D.}}:
\bbtitle{State Estimation for Robotics}.
\bpublisher{Cambridge University Press},
\blocation{Cambridge}
(\byear{2017})
\end{bbook}
\endbibitem

\bibitem[\protect\citeauthoryear{Sola et~al.}{2018}]{sola2018micro}
\begin{botherref}
\oauthor{\bsnm{Sola}, \binits{J.}},
\oauthor{\bsnm{Deray}, \binits{J.}},
\oauthor{\bsnm{Atchuthan}, \binits{D.}}:
A micro lie theory for state estimation in robotics.
arXiv preprint arXiv:1812.01537
(2018)
\end{botherref}
\endbibitem

\bibitem[\protect\citeauthoryear{Handa et~al.}{2014}]{handa2014benchmark}
\begin{bchapter}
\bauthor{\bsnm{Handa}, \binits{A.}},
\bauthor{\bsnm{Whelan}, \binits{T.}},
\bauthor{\bsnm{McDonald}, \binits{J.}},
\bauthor{\bsnm{Davison}, \binits{A.J.}}:
\bctitle{A benchmark for rgb-d visual odometry, 3d reconstruction and slam}.
In: \bbtitle{2014 IEEE International Conference on Robotics and Automation
  (ICRA)},
pp. \bfpage{1524}--\blpage{1531}
(\byear{2014}).
\bcomment{IEEE}
\end{bchapter}
\endbibitem

\bibitem[\protect\citeauthoryear{Geiger et~al.}{2013}]{geiger2013}
\begin{barticle}
\bauthor{\bsnm{Geiger}, \binits{A.}},
\bauthor{\bsnm{Lenz}, \binits{P.}},
\bauthor{\bsnm{Stiller}, \binits{C.}},
\bauthor{\bsnm{Urtasun}, \binits{R.}}:
\batitle{Vision meets robotics: The kitti dataset}.
\bjtitle{The International Journal of Robotics Research}
\bvolume{32}(\bissue{11}),
\bfpage{1231}--\blpage{1237}
(\byear{2013})
\end{barticle}
\endbibitem

\bibitem[\protect\citeauthoryear{Kornilova et~al.}{2022}]{kornilova2022}
\begin{bchapter}
\bauthor{\bsnm{Kornilova}, \binits{A.}},
\bauthor{\bsnm{Iarosh}, \binits{D.}},
\bauthor{\bsnm{Kukushkin}, \binits{D.}},
\bauthor{\bsnm{Goncharov}, \binits{N.}},
\bauthor{\bsnm{Mokeev}, \binits{P.}},
\bauthor{\bsnm{A.}, \binits{S.}},
\bauthor{\bsnm{Ferrer}, \binits{G.}}:
\bctitle{{EVOPS Benchmark: Evaluation of Plane Segmentation from RGBD and LiDAR
  Data}}.
In: \bbtitle{IEEE/RSJ International Conference on Intelligent Robots and
  Systems (IROS)}
(\byear{2022})
\end{bchapter}
\endbibitem

\bibitem[\protect\citeauthoryear{K{\"u}mmerle et~al.}{2009}]{kummerle2009rpe}
\begin{barticle}
\bauthor{\bsnm{K{\"u}mmerle}, \binits{R.}},
\bauthor{\bsnm{Steder}, \binits{B.}},
\bauthor{\bsnm{Dornhege}, \binits{C.}},
\bauthor{\bsnm{Ruhnke}, \binits{M.}},
\bauthor{\bsnm{Grisetti}, \binits{G.}},
\bauthor{\bsnm{Stachniss}, \binits{C.}},
\bauthor{\bsnm{Kleiner}, \binits{A.}}:
\batitle{On measuring the accuracy of slam algorithms}.
\bjtitle{Autonomous Robots}
\bvolume{27}(\bissue{4}),
\bfpage{387}--\blpage{407}
(\byear{2009})
\end{barticle}
\endbibitem

\end{thebibliography}
\end{document}